\documentclass{article}
\usepackage{microtype}
\usepackage{graphicx}
\usepackage{subfigure}
\usepackage{booktabs} 

\usepackage{amssymb}
\newcommand{\PRD}{\text{PRD}}


\usepackage{hyperref}



\usepackage[accepted]{icml2019}

\icmltitlerunning{Revisiting Precision-Recall For Generative Models}

\usepackage[utf8]{inputenc}
\usepackage{amsmath}
\usepackage{amsthm}
\usepackage{dsfont}
\newtheorem{theorem}{Theorem}
\newtheorem{corollary}{Corollary}
\newtheorem{lemma}{Lemma}
\newtheorem{definition}{Definition}

\renewcommand{\Pr}{\mathbb{P}}
\newcommand{\R}{\mathbb{R}}
\newcommand{\1}{\mathds{1}}
\newcommand{\measSpace}{\Omega}
\newcommand{\measSet}{\mathcal M(\measSpace)}
\newcommand{\pmeasSet}{\mathcal M^+(\measSpace)}
\newcommand{\probaSet}{\mathcal M_p(\measSpace)}

\begin{document}
\twocolumn[
\icmltitle{
           Revisiting Precision and Recall Definition
           for Generative Model Evaluation}

\begin{icmlauthorlist}
\icmlauthor{Loïc Simon}{greyc}
\icmlauthor{Ryan Webster}{greyc}
\icmlauthor{Julien Rabin}{greyc}
\end{icmlauthorlist}

\icmlaffiliation{greyc}{Normandie Univ, UNICAEN, ENSICAEN, CNRS, GREYC}

\icmlcorrespondingauthor{Loïc Simon}{loic.simon@ensicaen.fr}

\icmlkeywords{Machine Learning, ICML}

\vskip 0.3in
]

\printAffiliationsAndNotice{} 

\begin{abstract}
In this article we revisit the definition of Precision-Recall (PR) curves for generative models proposed by ~\cite{sajjadi2018}.
Rather than providing a scalar for generative quality, PR curves distinguish mode-collapse (poor recall) and bad quality (poor precision). 
We first generalize their formulation to arbitrary measures, hence removing any restriction to finite support. 
We also expose a bridge between PR curves and 
type I and type II error rates
of likelihood ratio classifiers on the task of discriminating between samples of the two distributions.
Building upon this new perspective, we propose a novel algorithm to approximate precision-recall curves, that shares some interesting methodological properties with the hypothesis testing technique from~\cite{lopez2017twosamples}.
We demonstrate the interest of the proposed formulation over the original approach on controlled multi-modal datasets.
\end{abstract}

\section{Introduction}
\label{sec:intro}

This work addresses the question of the evaluation of generative models, such as Generative Adversarial Networks (GAN)~\cite{goodfellow2014generative} or Variational Auto-Encoders~\cite{kingma2013auto}, that have attracted a lot of attention in the last years.
These approaches aim at training a model to generate new samples from an unknown target distribution $P$, 
for which one has only access to a (sufficiently large) sample set $X_i \sim P$. 
While this class of methods have given state-of-the-art results in many applications (see \emph{e.g.} \cite{brock2018large} for image generation, \cite{iizuka2017globally} for inpainting, \emph{etc}), there is still a need for evaluation techniques than can automatically assess and compare the quality of generated samples $Y_i \sim Q$ from different models with the target distribution $P$, for which the likelihood $P(Y_i)$ is unknown.
Most of the time, such a comparison is just reduced to a simple visual inspection of the samples $Y_i$, but very recently several techniques have been proposed to address this problem that boils down to the comparison of two empirical distributions in high dimension. While generative models have seen successful applications far beyond just image data (such as speech enhancement~\cite{pascual2017segan}, text to image synthesis \cite{reed2016generative} or text translation \cite{lample2018word}), we will focus on image generation, as is popular 
in the recent literature. 

\begin{figure}[tb]
    \centering
    \includegraphics[width=\linewidth]{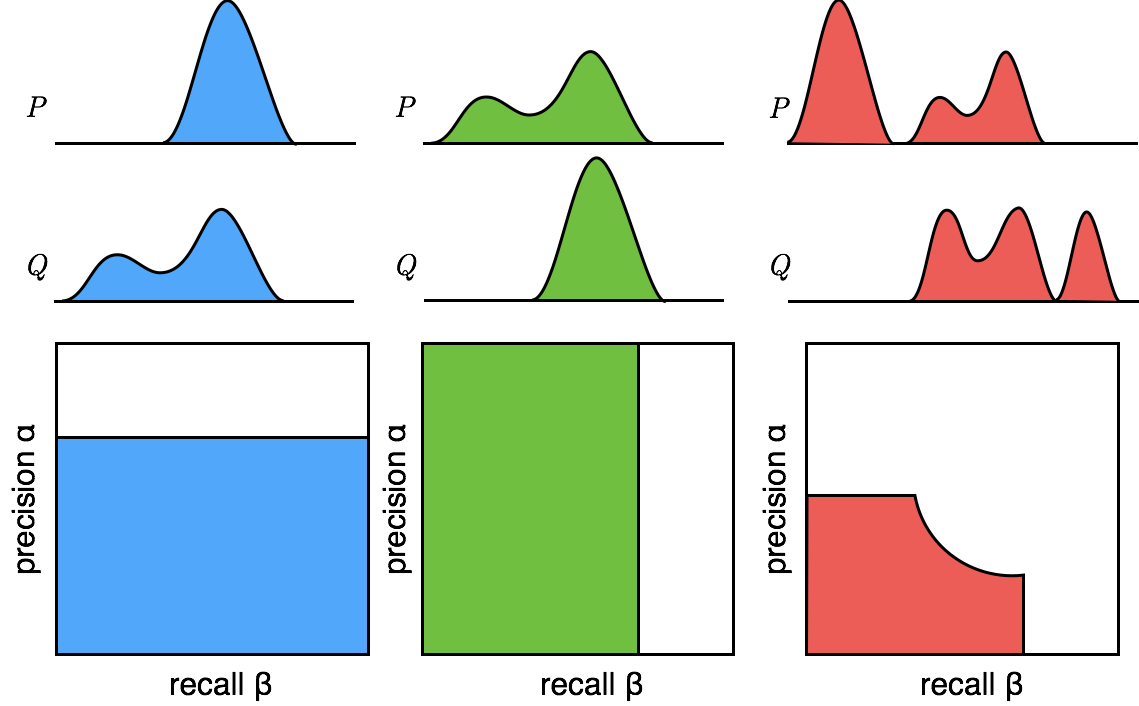}
    \caption{Illustration of precision-recall curves for multi-modal continuous distributions. Left: mode invention (precision is only partial but full recall). Middle: mode dropping (partial recall) but do not produce outliers (full precision but partial recall). Right: mode dropping / invention plus mode reweighting.}
    \label{fig:PRcurves}
\end{figure}

\paragraph{Previous Work}
When it comes to evaluating generative models of images, visual inspection, that is observing how ‘‘realistic’’ the images appear, remains the most important decider of the model's success. 
Indeed, state of the art methods, such a Progressive GANs~\cite{karras2018progressive} on face images or BigGAN~\cite{brock2018large} trained conditionally on ImageNet classes, include large grids of generated samples wherein the success of the method over previous approaches is visually obvious. 
Nonetheless, automatic evaluation of such models is extremely important, for example when conducting large scale empirical comparisons~\cite{LucicGANsEqual2017}, in cases where model failure is more subtle than simply poor image quality (\emph{e.g.} mode collapse) such as in~\cite{sajjadi2018}, or presumably in domains in which humans are less attuned to discern quality of samples.

Attempts to provide automatic assessment of image quality can be traced back to the first GAN methods \cite{RadfordDCGAN15}, where the authors assessed quality of generated samples with a nearest neighbor classifier. In \cite{SalimansIS16}, the so-called \textit{Inception Score} was introduced, which analyzes the entropy of image classes at the output of the Inception Network~\cite{szegedy2016rethinking}, which reflects if samples cover all classes and each clearly belongs to a particular class. In \cite{metz2017unrolled,webster2019GenOverfit} test set samples (\emph{i.e.} those unseen during training), are recovered via optimization. Successful generators are better at recovering all images from the training distribution, which in a controlled setting can be viewed as a notion of recall \cite{LucicGANsEqual2017, sajjadi2018}. 
In \cite{heusel2017gans}, the Fréchet Inception Distance was introduced (FID), which estimates the Fréchet distance between inception features of real and generated samples modeled as multivariate normal distributions. The FID has been widely adopted because of its consistency with human inspection and sensitivity to small changes in the real distribution (\emph{e.g.} slight blurring or small artifacts in generated images). A few recent approaches involve training a binary classifier to separate fake (\emph{i.e.} generated) samples $Y_i$ from real data samples $X_i$. 
In~\cite{lopez2017twosamples}, a score is defined from a two-sample statistical test of the hypothesis $P=Q$. 
Finally, in~\cite{im2018quantitatively}, classifiers trained with various divergences (normally used as objectives for discriminators during GAN training) are used to define a metric between $Q$ and $P$. 
Surprisingly, successful models such as WGAN~\cite{ArjovskyWGAN17} have the smallest distance even on those metrics which were not used for training (\emph{e.g.} a WGAN trained with the Wasserstein-1 distance evaluated with a least squares discriminator). 

Unfortunately as pointed out by \cite{sajjadi2018}, the popular FID only provides a scalar value that cannot distinguish a model $Q$ failing to cover all of $P$ (referred to henceforth as \emph{low recall}) from a model $Q$ which has poor sample quality (referred to as low precision).
For example, when modeling a distribution of face images, a $Q$ containing only male faces with high quality versus a $Q$ containing both genders with blurry faces may have equal FID.
Following the lead of~\cite{sajjadi2018}, we will consider another category where one wants not only to assess if the samples are of good quality (high precision) but also to measure if the generated distribution $Q$ captures the variability of the target one (high recall).
The reader may refer to Figure~\ref{fig:PRcurves} to gain a crude understanding of the intended purpose of precision and recall.
\cite{sajjadi2018} proposed an elegant definition of precision and recall for discrete distributions. They challenge their definition on image generation by discretizing the probability distributions $P$ and $Q$ over Inception features via K-means clustering.
Note that a similar notion was proposed by the authors of PACGAN \cite{lin2018pacgan} under the name of mode collapse region (denoted as $MCR(P,Q)$). Their motivation was to develop a theoretical tool to analyze how using multi-element samples in the discriminator can mitigate mode dropping. 

\paragraph{Contributions and outline}
The paper is organized as follows.
First, Section~\ref{sec:notations} recalls usual notations and some definitions from measure theory.
Then, we expose the main contributions of this paper: 
\begin{itemize}
    \item A first limit of~\cite{sajjadi2018} is the restriction to discrete probability distributions (\emph{i.e.} considering that samples live in a finite state space $\Omega$).
    In Section~\ref{sec:PR}, this assumption is dispensed by defining Precision-Recall curves from \emph{arbitrary} probability distributions for which some properties are then given;
    
    \item In the original work of~\cite{sajjadi2018} the Precision-Recall curves approach was opposed to the hypothesis testing techniques from~\cite{lopez2017twosamples}; 
    we demonstrate in Section~\ref{sec:classifier} that precision and recall are actually linear combinations of type I and type II errors of optimal likelihood ratio classifiers, and give as well some upper-bound guarantee for the estimation of Precision-Recall curves with non-optimal classifiers.
    Besides, our formulation also exhibits a relationship with the MCR notion proposed by \cite{lin2018pacgan} which turns out to be the ROC curves ($1-$type I versus type II errors) for optimal classifiers;
    
    \item
    Section~\ref{sec:algorithm} details the proposed algorithm to estimate Precision-Recall curves more accurately; 
    the clustering optimization step used in the original method is now simply replaced by the training of a classifier which learns to separate samples from the two datasets;
    
    \item The experimental Section~\ref{sec:exp} demonstrates the advantage of the proposed formulation in a controlled setting using labelled datasets (CIFAR10 and ImageNet categories), and then shows its practical interest for evaluating state-of-the art generative image models.
\end{itemize}

\section{Notions from standard measure theory}\label{sec:notations}

We start these notes by recalling some standard notations, definitions, and results of measure theory.
For the remainder, $(\measSpace,\mathcal A)$ represents a common measurable space, and we will denote $\measSet$ the set of signed measures, $\pmeasSet$ the set of positive measures and $\probaSet$ the set of probability distributions over that measurable space.
\begin{definition}
Let $\mu,\nu$ two signed measures. We denote by 
\begin{itemize}
    \item $\mathrm{supp}(\mu)$, the support of $\mu$;
    \item $\frac{d\mu}{d\nu}$, the Radon-Nykodim derivative of $\mu$ w.r.t. $\nu$;
    \item $|\mu|$, the total variation measure of  $\mu$;
    \item $\mu\wedge\nu = \min(\mu,\nu) := \frac 12 (\mu+\nu -|\mu-\nu|)$
    (a.k.a the measure of largest common mass between $\mu$ and $\nu$ \cite{piccoli2017}).
\end{itemize}
\end{definition}

The extended half real-line  is denoted by $\overline{\R^+} = \R^+ \cup \{\infty\}$.

\begin{theorem}[Hahn decomposition] Let $\mu\in\measSet$. Then there exists an essentially unique partition $\measSpace=\measSpace^+_{\mu}\sqcup \measSpace^-_{\mu}$ (\emph{i.e.} where $\measSpace^+_{\mu} \cap \measSpace^-_{\mu} = \varnothing$) 
such that $\forall A\in \mathcal A$:
\begin{equation*}
    \begin{split}
        A\subset\measSpace^+_{\mu} &\Rightarrow \mu(A)\geq 0\\
        A\subset\measSpace^-_{\mu} &\Rightarrow \mu(A)\leq 0\\
    \end{split}
    \,.
\end{equation*}
\end{theorem}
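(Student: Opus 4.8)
The plan is to follow the classical exhaustion argument over ``positive sets.'' Call a set $A \in \mathcal A$ \emph{positive} (resp.\ \emph{negative}) for $\mu$ when $\mu(B) \geq 0$ (resp.\ $\mu(B) \leq 0$) for every measurable $B \subseteq A$; the theorem then amounts to splitting $\measSpace$ into a positive and a negative part. Up to replacing $\mu$ by $-\mu$, I would first reduce to the case where $\mu$ never takes the value $+\infty$. The crux is the following lemma: if $\mu(E) > 0$, then $E$ contains a positive set $A$ with $\mu(A) \geq \mu(E)$. Granting it, both existence and uniqueness are short.

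To prove the lemma I would greedily carve the most negative pieces out of $E$. If $E$ is already positive we are done; otherwise, recursively, having removed disjoint $E_1, \dots, E_{k-1}$, let $n_k$ be the least positive integer for which some measurable $E_k \subseteq E \setminus (E_1 \cup \dots \cup E_{k-1})$ satisfies $\mu(E_k) < -1/n_k$, and pick such an $E_k$; if at some stage no such integer exists, the leftover set is positive and the process stops. Otherwise set $A = E \setminus \bigcup_k E_k$. Since $\mu(A) = \mu(E) - \sum_k \mu(E_k)$ is finite and each $\mu(E_k)$ is negative, the series converges, so $\mu(E_k) \to 0$ and hence $n_k \to \infty$; moreover $\mu(A) \geq \mu(E) > 0$. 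Finally, for any measurable $B \subseteq A$ and any $k$ we have $B \subseteq E \setminus (E_1 \cup \dots \cup E_{k-1})$, so minimality of $n_k$ forbids $\mu(B) < -1/(n_k - 1)$; letting $k \to \infty$ gives $\mu(B) \geq 0$, i.e.\ $A$ is positive.

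For existence of the decomposition, set $s = \sup\{\mu(A) : A \text{ positive}\}$, which lies in $[0,\infty)$ because $\varnothing$ is positive and $\mu$ omits $+\infty$. Choose positive sets $A_n$ with $\mu(A_n) \to s$ and let $\measSpace^+_\mu = \bigcup_n A_n$; a countable union of positive sets is positive (write it as the disjoint union $\bigsqcup_n (A_n \setminus \bigcup_{m<n} A_m)$, each piece lying in a positive set), so $\measSpace^+_\mu$ is positive and $\mu(\measSpace^+_\mu) = s$. Put $\measSpace^-_\mu = \measSpace \setminus \measSpace^+_\mu$. If $\measSpace^-_\mu$ were not negative, some $E \subseteq \measSpace^-_\mu$ would have $\mu(E) > 0$; the lemma would produce a positive $A \subseteq E$ with $\mu(A) > 0$, whence $\measSpace^+_\mu \cup A$ is positive with $\mu(\measSpace^+_\mu \cup A) = s + \mu(A) > s$, contradicting the definition of $s$. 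Hence $\measSpace^-_\mu$ is negative and $\measSpace = \measSpace^+_\mu \sqcup \measSpace^-_\mu$ is the required partition.

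For essential uniqueness, given two such partitions $(\measSpace^+_1, \measSpace^-_1)$ and $(\measSpace^+_2, \measSpace^-_2)$, the inclusion $\measSpace^+_1 \setminus \measSpace^+_2 \subseteq \measSpace^+_1$ forces $\mu \geq 0$ on its measurable subsets while $\measSpace^+_1 \setminus \measSpace^+_2 \subseteq \measSpace^-_2$ forces $\mu \leq 0$ on them, so this set is $\mu$-null; symmetrically $\measSpace^+_2 \setminus \measSpace^+_1$ is $\mu$-null, so the two decompositions agree up to a $\mu$-null set, which is the sense in which the partition is essentially unique. I expect the lemma to be the only real obstacle: one must handle the early-termination case cleanly, justify convergence of $\sum_k \mu(E_k)$ from finiteness of $\mu$, and extract positivity of $A$ from the minimality of the $n_k$. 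Once that is in place, the supremum argument and the uniqueness argument are routine, the only further subtlety being the disjointification used to see that a countable union of positive sets is positive.
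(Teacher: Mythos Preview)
Your proof is the standard textbook argument and is correct. However, the paper does not actually prove this theorem: it is stated in Section~\ref{sec:notations} as a classical result from measure theory being recalled for later use (specifically, to justify Corollary~\ref{thm:hahn-cor} and the manipulations in the proof of Theorem~\ref{th:likelihood_ratio}), with no proof given. So there is nothing to compare your approach against; the authors simply cite the Hahn decomposition as background.
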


\begin{corollary}
\label{thm:hahn-cor}
Let $\mu,\nu\in\pmeasSet$. Then, $\forall A\in\mathcal A$, we have:
$$(\mu\wedge\nu)(A)= \mu(A\cap\measSpace^-_{\mu-\nu}) + \nu(A\cap\measSpace^+_{\mu-\nu})
.$$
\end{corollary}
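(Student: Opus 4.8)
The plan is to reduce everything to the interplay between the Hahn decomposition of the signed measure $\sigma := \mu-\nu$ and its total variation measure $|\sigma|$. First I would fix $A\in\mathcal A$ and use additivity of all the measures involved to split $A$ into its two Hahn pieces $A^+ := A\cap\measSpace^+_{\sigma}$ and $A^- := A\cap\measSpace^-_{\sigma}$, so that it suffices to prove the identity separately for measurable subsets of $\measSpace^+_{\sigma}$ and for measurable subsets of $\measSpace^-_{\sigma}$, and then add.

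The key step is the standard fact that $|\sigma|$ is ``carried in a signed way'' by the Hahn sets: for any measurable $B\subset\measSpace^+_{\sigma}$ one has $|\sigma|(B)=\sigma(B)=\mu(B)-\nu(B)$, and for any measurable $B\subset\measSpace^-_{\sigma}$ one has $|\sigma|(B)=-\sigma(B)=\nu(B)-\mu(B)$. This is obtained by recalling that $\sigma^+(\cdot)=\sigma(\,\cdot\cap\measSpace^+_{\sigma})$ and $\sigma^-(\cdot)=-\sigma(\,\cdot\cap\measSpace^-_{\sigma})$ give the Jordan decomposition $\sigma=\sigma^+-\sigma^-$, so that $|\sigma|=\sigma^++\sigma^-$, and then restricting $B$ to lie inside one of the two Hahn pieces annihilates one of the two terms.

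Plugging this into the definition $\mu\wedge\nu=\tfrac12(\mu+\nu-|\sigma|)$ gives, on $A^+$,
\[
(\mu\wedge\nu)(A^+)=\tfrac12\bigl(\mu(A^+)+\nu(A^+)-(\mu(A^+)-\nu(A^+))\bigr)=\nu(A^+),
\]
and on $A^-$,
\[
(\mu\wedge\nu)(A^-)=\tfrac12\bigl(\mu(A^-)+\nu(A^-)-(\nu(A^-)-\mu(A^-))\bigr)=\mu(A^-).
\]
Summing the two and recalling $A^+=A\cap\measSpace^+_{\mu-\nu}$ and $A^-=A\cap\measSpace^-_{\mu-\nu}$ yields exactly the claimed formula. (Equivalently, this says $\mu\wedge\nu$ coincides with $\nu$ on $\measSpace^+_{\mu-\nu}$ and with $\mu$ on $\measSpace^-_{\mu-\nu}$, which matches the intuition that $\mu\wedge\nu$ locally selects the smaller of the two measures.)

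The only genuine obstacle is bookkeeping around well-definedness: $\mu-\nu$ must be a bona fide signed measure (no $\infty-\infty$), so one should either assume $\mu$ or $\nu$ finite — harmless here since the intended applications are to probability distributions — or treat the general $\sigma$-finite case via an exhaustion argument. Everything else is routine additivity once the Hahn--Jordan bridge is in place, and the essential uniqueness of the Hahn partition from the preceding theorem ensures the right-hand side does not depend on the chosen version of $\measSpace^{\pm}_{\mu-\nu}$.
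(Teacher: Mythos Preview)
Your argument is correct and is exactly the natural route: split $A$ along the Hahn partition of $\mu-\nu$, use the Jordan decomposition to evaluate $|\mu-\nu|$ on each piece, and plug into the definition $\mu\wedge\nu=\tfrac12(\mu+\nu-|\mu-\nu|)$. The bookkeeping remark about $\mu-\nu$ being a genuine signed measure is also well taken and handled appropriately for the paper's setting (probability measures).

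There is nothing to compare here, because the paper does not actually supply a proof of this corollary: it is stated immediately after the Hahn decomposition theorem as a direct consequence and left to the reader. Your write-up fills that gap cleanly and is precisely the argument the authors are implicitly relying on.
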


\section{Precision-Recall set and curve}\label{sec:PR}

We follow \cite{sajjadi2018} for the definition of the Precision-Recall (PR) set that we extent to any arbitrary pair of probability distributions $P$ and $Q$, up to two additional minor changes. First, we have tried
to adapt their definition in a shorter form. Second, we include the left and lower boundaries in the PR set.
\begin{definition}
Let $P,Q$ two distributions from $\probaSet$. We refer to the Precision-Recall set $\PRD(P,Q)$ as the set of Precision-Recall pairs $(\alpha,\beta)\in\R^+\times\R^+$ such that
\begin{equation}
 \exists\, \mu \in\probaSet, P\geq\beta \mu , Q \geq \alpha \mu
 \;.
\end{equation}
\end{definition}
The \emph{precision} value $\alpha$ is related to the proportion of the generated distribution $Q$ that match the true data $P$, while conversely the \emph{recall} value $\beta$ is the amount of the distribution $P$ that can be reconstructed from $Q$.
Therefore, in the context of generative models, one would like to have admissible precision-recall pairs that are as close to $(1,1)$ as possible.
One can then easily show the following properties:
\begin{theorem}
Let $P,Q$ two distributions from $\probaSet$. Then,
\begin{enumerate}
    \item $(0,0)\in \PRD(P,Q)\subset [0,1]\times[0,1]$;
    \item $P=Q \Leftrightarrow (1,1)\in \PRD(P,Q)$;
    \item $(\alpha,\beta)\in \PRD(P,Q)$ and $\alpha'\leq\alpha, \beta'\leq\beta$ implies that $(\alpha',\beta')\in \PRD(P,Q)$.
\end{enumerate}
\end{theorem}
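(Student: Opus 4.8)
The plan is to establish each of the three items by working directly from the definition of $\PRD(P,Q)$, relying on three elementary observations: (i) a measure inequality such as $P\geq\beta\mu$ may be tested on any $A\in\mathcal A$, and in particular on $\measSpace$ itself, where it becomes a scalar inequality between total masses; (ii) if $\mu\in\pmeasSet$, then replacing $\beta$ by a smaller nonnegative constant preserves the inequality $P\geq\beta\mu$, since $\mu$ is nonnegative; and (iii) a positive measure dominating a probability measure of the same total mass must coincide with it. Only this last point requires a small argument; the rest is direct substitution.

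For item 1, the pair $(0,0)$ is witnessed by any $\mu\in\probaSet$ (e.g. $\mu=P$), because $P\geq 0$ and $Q\geq 0$ hold trivially for positive measures; and since every element of $\PRD(P,Q)$ lies in $\R^+\times\R^+$ by definition, it remains only to bound $\alpha,\beta$ by $1$. Given a witness $\mu$ for $(\alpha,\beta)$, evaluating $P\geq\beta\mu$ and $Q\geq\alpha\mu$ on $\measSpace$ gives $1=P(\measSpace)\geq\beta\mu(\measSpace)=\beta$ and, likewise, $1\geq\alpha$. For the easy direction of item 2, if $P=Q$ then $\mu:=P\in\probaSet$ satisfies $P\geq\mu$ and $Q\geq\mu$ with equality, so $(1,1)\in\PRD(P,Q)$.

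For the converse in item 2, let $\mu\in\probaSet$ witness $(1,1)$, so $P\geq\mu$ and $Q\geq\mu$. For any $A\in\mathcal A$, applying $P\geq\mu$ to both $A$ and $A^c$ yields $(P-\mu)(A)\geq 0$ and $(P-\mu)(A^c)\geq 0$; summing these and using $P(\measSpace)=\mu(\measSpace)=1$ forces both quantities to vanish, hence $P=\mu$. The identical argument gives $Q=\mu$, so $P=Q$. Finally, for item 3, suppose $(\alpha,\beta)\in\PRD(P,Q)$ with witness $\mu$ and take $0\leq\alpha'\leq\alpha$, $0\leq\beta'\leq\beta$ (the nonnegativity is implicit from $\PRD(P,Q)\subset\R^+\times\R^+$). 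Since $\mu\in\pmeasSet$, we have $\beta'\mu\leq\beta\mu\leq P$ and $\alpha'\mu\leq\alpha\mu\leq Q$, so the very same $\mu$ witnesses $(\alpha',\beta')\in\PRD(P,Q)$.

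The main (and only) obstacle is the converse of item 2, i.e. upgrading the domination $P\geq\mu$ to the equality $P=\mu$; this is resolved by testing the inequality simultaneously on a set and its complement, as above. No compactness, topology, or structural hypothesis on $\measSpace$ is needed, which is precisely why the statement extends verbatim from the discrete setting of~\cite{sajjadi2018} to arbitrary probability distributions.
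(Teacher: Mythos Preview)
Your proof is correct. The paper does not actually supply a proof of this theorem---it merely prefaces the statement with ``One can then easily show the following properties''---so your argument fills in exactly the elementary verification the authors intend, and the key step (upgrading $P\geq\mu$ to $P=\mu$ via the total-mass constraint) is the natural one.
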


Because of the lack of natural order on $[0,1]\times[0,1]$, no point of $\PRD(P,Q)$ is strictly better than  all the others. Yet, the singular importance of $(1,1)$ should draw our attention to the Pareto front of $\PRD(P,Q)$ defined as follows. %
\begin{definition}
The precision recall-curve $\partial \PRD(P,Q)$ is the set of $(\alpha,\beta)\in \PRD(P,Q)$ such that
    \begin{equation*}
       \forall (\alpha',\beta')\in \PRD(P,Q), \alpha\geq\alpha'\text{ or }\beta\geq\beta' .
    \end{equation*}
\if0
\begin{enumerate}
    \item We write $(\alpha,\beta)\preceq (\alpha',\beta')$ and say that $(\alpha,\beta)$ is weakly dominated by $(\alpha',\beta')$ iff:
    $$(\alpha<\alpha'\text{ and }\beta<\beta')\text{ or } (\alpha,\beta)=(\alpha',\beta')$$
    \item We define the precision recall curve $\partial \PRD(P,Q)$ as the set of $(\alpha,\beta)\in \PRD(P,Q)$ such that
    \begin{equation*}
       \forall(\alpha',\beta')\in \PRD(P,Q)\setminus\{(\alpha,\beta)\}, (\alpha,\beta)\not\preceq(\alpha',\beta') 
    \end{equation*}
    or equivalently such that
    \begin{equation*}
       \forall (\alpha',\beta')\in \PRD(P,Q), \alpha\geq\alpha'\text{ or }\beta\geq\beta' 
    \end{equation*}
\end{enumerate}
\fi
\end{definition}

In fact, this frontier is a curve for which \cite{sajjadi2018} have exposed a parameterization. We generalize their result here (dropping any restriction to discrete probabilities).
\begin{theorem}
\label{thm:param}
Let $P,Q$ two distributions from $\probaSet$
and $(\alpha,\beta)$ positive. Then, denoting\footnote{As is conventionally surmised in measure theory $0\times\infty=0$ so that $\alpha_\infty=Q(\mathrm{supp}(P))$ and $\beta_0=P(\mathrm{supp}(Q))$.}
    \begin{equation}
    \forall \lambda \in \overline{\R^+},
    \left\{
    \begin{split}
        \alpha_\lambda :=& \left({(\lambda P)\wedge Q} \right) (\measSpace)\\
        \beta_\lambda :=& \left({P\wedge\tfrac 1\lambda Q}\right)(\measSpace)
    \end{split}
    \right.
    \end{equation}
\begin{enumerate}
    \item $(\alpha,\beta)\in \PRD(P,Q)$ iff $\alpha\leq \alpha_\lambda$ and $\beta \leq \beta_\lambda$ where $\lambda:=\frac \alpha\beta\in\overline{\R^+}$.
    \item  As a result,  the PR curve can be parameterized as:
\begin{equation}
    \partial \PRD(P,Q) = \lbrace (\alpha_\lambda, \beta_\lambda) / \lambda \in \overline{\R^{+}}\rbrace
    \, .
\end{equation}

\end{enumerate}
\end{theorem}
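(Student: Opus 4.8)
The plan is to establish the membership characterization (item~1) first and then read off the parameterization (item~2). The whole argument rests on two elementary properties of the common-mass operation: \emph{positive homogeneity}, $c\,(\mu\wedge\nu)=(c\mu)\wedge(c\nu)$ for $c\geq 0$ (immediate from the total-variation formula defining $\wedge$), and the \emph{greatest-lower-bound property}, that $\xi\leq\mu$ and $\xi\leq\nu$ imply $\xi\leq\mu\wedge\nu$. The latter follows at once from Corollary~\ref{thm:hahn-cor}: for any $A$, $\xi(A)=\xi(A\cap\measSpace^-_{\mu-\nu})+\xi(A\cap\measSpace^+_{\mu-\nu})\leq\mu(A\cap\measSpace^-_{\mu-\nu})+\nu(A\cap\measSpace^+_{\mu-\nu})=(\mu\wedge\nu)(A)$. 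Homogeneity combined with the definitions yields the pivotal identity $\alpha_\lambda=\lambda\,\beta_\lambda$ for every $\lambda$, since both sides equal $\big((\lambda P)\wedge Q\big)(\measSpace)$; equivalently, when $\lambda=\alpha/\beta$ with $\alpha,\beta>0$, the rescaled common-mass measures coincide: $\tfrac1\alpha\big((\lambda P)\wedge Q\big)=\tfrac1\beta\big(P\wedge\tfrac1\lambda Q\big)$.

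For the forward implication of item~1, take $\alpha,\beta>0$ and a witness $\mu\in\probaSet$ of $(\alpha,\beta)\in\PRD(P,Q)$, so $\beta\mu\leq P$ and $\alpha\mu\leq Q$; dividing gives $\mu\leq\tfrac1\beta P$ and $\mu\leq\tfrac1\alpha Q$, hence $\mu\leq\tfrac1\beta P\wedge\tfrac1\alpha Q$ by the greatest-lower-bound property. Evaluating at $\measSpace$, using $\mu(\measSpace)=1$ and the rescaling identity above, gives $1\leq\alpha_\lambda/\alpha$ and $1\leq\beta_\lambda/\beta$, i.e. $\alpha\leq\alpha_\lambda$ and $\beta\leq\beta_\lambda$ with $\lambda=\alpha/\beta$. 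For the converse, assume $\alpha\leq\alpha_\lambda$, $\beta\leq\beta_\lambda$, $\lambda=\alpha/\beta$, $\alpha,\beta>0$, and set $\nu:=\tfrac1\alpha\big((\lambda P)\wedge Q\big)=\tfrac1\beta\big(P\wedge\tfrac1\lambda Q\big)\in\pmeasSet$. Then $\nu\leq\tfrac1\beta P$ and $\nu\leq\tfrac1\alpha Q$ (since $\mu\wedge\nu\leq\mu$ and $\leq\nu$), and $\nu(\measSpace)=\alpha_\lambda/\alpha=\beta_\lambda/\beta\geq 1$ thanks to the hypotheses and $\alpha_\lambda=\lambda\beta_\lambda$. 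Therefore $\mu:=\nu/\nu(\measSpace)$ is a probability measure with $\mu\leq\nu$, whence $\beta\mu\leq\beta\nu\leq P$ and $\alpha\mu\leq\alpha\nu\leq Q$, so $(\alpha,\beta)\in\PRD(P,Q)$. The degenerate regimes $\lambda\in\{0,\infty\}$ (i.e. $\alpha=0$ or $\beta=0$) are treated separately, directly from the definition of $\PRD$ and the footnote conventions $\alpha_\infty=Q(\mathrm{supp}(P))$, $\beta_0=P(\mathrm{supp}(Q))$.

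Item~2 is then a formal consequence. Given $(\alpha,\beta)\in\PRD(P,Q)$ with $\lambda:=\alpha/\beta$, item~1 gives $\alpha\leq\alpha_\lambda$, $\beta\leq\beta_\lambda$, and applying item~1 to $(\alpha_\lambda,\beta_\lambda)$ — whose ratio is again $\lambda$ by the identity $\alpha_\lambda=\lambda\beta_\lambda$ — shows $(\alpha_\lambda,\beta_\lambda)\in\PRD(P,Q)$; hence every point of $\PRD(P,Q)$ is coordinatewise dominated by a point of $\mathcal C:=\{(\alpha_\lambda,\beta_\lambda):\lambda\in\overline{\R^+}\}$, so $\partial\PRD(P,Q)\subseteq\mathcal C$. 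For the reverse inclusion, note that $\lambda\mapsto\alpha_\lambda$ is nondecreasing and $\lambda\mapsto\beta_\lambda$ nonincreasing (raising $\lambda$ increases $\lambda P$, hence $(\lambda P)\wedge Q$; and decreases $\tfrac1\lambda Q$, hence $P\wedge\tfrac1\lambda Q$). If some $(\alpha',\beta')\in\PRD(P,Q)$ strictly dominated $(\alpha_\lambda,\beta_\lambda)$, then with $\lambda'=\alpha'/\beta'$ item~1 gives $\alpha'\leq\alpha_{\lambda'}$, $\beta'\leq\beta_{\lambda'}$, and comparing $\lambda'$ to $\lambda$: if $\lambda'\geq\lambda$ then $\beta'\leq\beta_{\lambda'}\leq\beta_\lambda$, while if $\lambda'\leq\lambda$ then $\alpha'\leq\alpha_{\lambda'}\leq\alpha_\lambda$ — either way contradicting strict domination. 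Thus each $(\alpha_\lambda,\beta_\lambda)$ is Pareto-optimal, giving $\mathcal C\subseteq\partial\PRD(P,Q)$. I expect the main nuisance to be precisely this boundary bookkeeping — the values $\lambda\in\{0,\infty\}$ and configurations where $\alpha_\lambda$ or $\beta_\lambda$ vanishes so that the ratio $\alpha_\lambda/\beta_\lambda$ is indeterminate — where one must lean on the footnote conventions and argue by hand; the interior argument is routine once the identity $\alpha_\lambda=\lambda\beta_\lambda$ is available.
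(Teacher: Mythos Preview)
Your argument is correct and follows essentially the same route as the paper: both reduce membership in $\PRD(P,Q)$ to the existence of a probability $\mu\leq\tfrac1\alpha\big((\lambda P)\wedge Q\big)=\tfrac1\beta\big(P\wedge\tfrac1\lambda Q\big)$ via the greatest-lower-bound property of $\wedge$, and then read off the criterion from $\mu(\measSpace)=1$. You are considerably more explicit than the paper --- in particular you spell out the converse construction by normalization and give a full Pareto-front argument for item~2 (including the monotonicity of $\lambda\mapsto\alpha_\lambda,\beta_\lambda$), whereas the paper dispatches item~2 with ``derives easily from the first.''
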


\begin{proof}
The second point derives easily from the first which we demonstrate now. 
Let $(\alpha,\beta)$ positive and $\lambda:=\frac \alpha\beta$. By definition $(\alpha, \beta)\in \PRD(P,Q)$ iff $\exists\ \mu\in\probaSet$
$$P\geq \beta \mu = \frac\alpha\lambda\mu\text{ and } Q \geq \alpha\mu$$
iff
$$\mu\leq \frac 1\alpha (\lambda P \wedge Q)(\measSpace) = \frac 1\beta (P \wedge \frac Q\lambda)$$
which yields the expected criteria given that $\mu(\measSpace)=1$.
\end{proof}

\section{Link with binary classification}\label{sec:classifier}
Let us consider samples $(X_i,Y_i)\sim P\times Q$ and as many Bernoulli variables $U_i\sim \mathcal B_{\frac 12}$. And let $Z_i=U_i X_i + (1-U_i) Y_i$. Then $Z_i\sim \Pr_Z$ follows a mixture of $P$ and $Q$, namely $\Pr_Z = \tfrac 12 (P + Q)$.
Then, let us consider the binary classification task where from $Z_i$, one should decide whether $U_i=1$ (often referred to as the null hypothesis).
We show that the precision-recall curve can be reinterpreted as  mixed error rates of binary classifiers obtained as likelihood ratio tests (hence the most powerful classifiers according to the celebrated Neyman-Pearson lemma).

\begin{theorem}\label{th:likelihood_ratio}
Let $\lambda\geq 0$. Let $Z=U X +(1-U)Y$ where $(X,Y,U)\sim P\times Q\times \mathcal B_{\frac 12}$. Defining the likelihood ratio classifier $\tilde U$ as the following indicator function
\begin{equation}\label{eq:optimal_classifier}
    \tilde U(Z) :=
    \displaystyle \1_{\lambda\frac{dP}{d\Pr_Z}(Z)\geq \frac{dQ}{d\Pr_Z}(Z)} \,,
\end{equation}
\begin{equation*}
    \text{then, } \quad
    \alpha_\lambda = \lambda \Pr(\tilde U=0|U=1)+ \Pr(\tilde U=1|U=0) \, .
\end{equation*}
\end{theorem}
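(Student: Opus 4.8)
The plan is to compute $\alpha_\lambda = \bigl((\lambda P)\wedge Q\bigr)(\measSpace)$ via the Hahn decomposition and then recognize the two resulting terms as the two conditional error probabilities of $\tilde U$. First I would apply Corollary~\ref{thm:hahn-cor} with $\mu=\lambda P$ (a positive measure since $\lambda\geq 0$) and $\nu=Q$, evaluated on $A=\measSpace$, to obtain
$$\alpha_\lambda = \bigl((\lambda P)\wedge Q\bigr)(\measSpace) = \lambda P\bigl(\measSpace^-_{\lambda P - Q}\bigr) + Q\bigl(\measSpace^+_{\lambda P - Q}\bigr),$$
where $\measSpace=\measSpace^+_{\lambda P-Q}\sqcup\measSpace^-_{\lambda P-Q}$ is a Hahn decomposition of the signed measure $\lambda P - Q$.

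Next I would identify these Hahn sets with the decision regions of the classifier. Since $\Pr_Z=\tfrac12(P+Q)$ dominates both $P$ and $Q$, the densities $\frac{dP}{d\Pr_Z}$ and $\frac{dQ}{d\Pr_Z}$ exist, and for every $A\in\mathcal A$ one has $(\lambda P-Q)(A)=\int_A\bigl(\lambda\frac{dP}{d\Pr_Z}-\frac{dQ}{d\Pr_Z}\bigr)\,d\Pr_Z$. Hence the set $\{z:\lambda\frac{dP}{d\Pr_Z}(z)\geq\frac{dQ}{d\Pr_Z}(z)\}=\{\tilde U=1\}$ is an admissible choice for $\measSpace^+_{\lambda P-Q}$ and its complement $\{\tilde U=0\}$ for $\measSpace^-_{\lambda P-Q}$; the essential uniqueness in the Hahn decomposition theorem ensures that the way the boundary set $\{\lambda\frac{dP}{d\Pr_Z}=\frac{dQ}{d\Pr_Z}\}$ is assigned does not affect the values $\lambda P(\measSpace^-_{\lambda P-Q})$ and $Q(\measSpace^+_{\lambda P-Q})$.

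Finally I would translate the two terms into conditional probabilities: conditionally on $U=1$ we have $Z=X\sim P$, so $\Pr(\tilde U=0\mid U=1)=P(\{\tilde U=0\})=P(\measSpace^-_{\lambda P-Q})$, and conditionally on $U=0$ we have $Z=Y\sim Q$, so $\Pr(\tilde U=1\mid U=0)=Q(\{\tilde U=1\})=Q(\measSpace^+_{\lambda P-Q})$. Substituting these back into the displayed identity yields $\lambda\Pr(\tilde U=0\mid U=1)+\Pr(\tilde U=1\mid U=0)=\alpha_\lambda$. The only real obstacle is the measure-theoretic bookkeeping: justifying the passage from the ordering $\lambda P\geq Q$ of signed measures to the $\Pr_Z$-a.e.\ inequality of densities, and checking that the indicator's tie-breaking is harmless; the degenerate case $\lambda=0$ (where $\alpha_0=0$ and both sides vanish) should be noted separately but is immediate.
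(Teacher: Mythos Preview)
Your proposal is correct and follows essentially the same route as the paper: identify the decision regions $\{\tilde U=1\}$ and $\{\tilde U=0\}$ with a Hahn decomposition $\measSpace^+_{\lambda P-Q}\sqcup\measSpace^-_{\lambda P-Q}$, compute the two conditional error probabilities as $Q(\measSpace^+_{\lambda P-Q})$ and $P(\measSpace^-_{\lambda P-Q})$, and invoke Corollary~\ref{thm:hahn-cor} to conclude. The only differences are cosmetic---you run the argument in the opposite order and add extra measure-theoretic justification (domination by $\Pr_Z$, tie-breaking, the $\lambda=0$ case) that the paper leaves implicit.
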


\begin{proof}
Note that we can reformulate $\tilde U$ as 
$\tilde U(Z)=~\1_{\measSpace^+_{\lambda P-Q}}(Z)$.
Then, 
\begin{equation*}
\label{eq:alpha-classif}
\begin{split}
    \Pr(\tilde U=1|U=0)=&\int_\Omega \1_{\measSpace^+_{\lambda P-Q}}(z) d \Pr_Z(z|U=0)\\
    =& \int_\Omega \1_{\measSpace^+_{\lambda P-Q}}(z)dQ(z)
    = Q(\measSpace^+_{\lambda P-Q})
\end{split}\,.
\end{equation*}
Now, using 
$\1_{\tilde U=0} = \1_{\measSpace^-_{\lambda P-Q}}$, we have similarly $\Pr(\tilde U=0|U=1)=P(\measSpace^-_{\lambda P-Q})$.
%
Combining the two errors, we get
$$\lambda \Pr(\tilde U=0|U=1)+ \Pr(\tilde U=1|U=0) = (\lambda P\wedge Q)(\measSpace) = \alpha_\lambda$$
where we have used Corollary~\ref{thm:hahn-cor}.
\end{proof}

The previous protocol demonstrates that points on the PR curve are actually a linear combination of type I error rate (probability of rejection of the true null hypothesis $\Pr(\tilde U=0|U=1)$) with type II error rate ($\Pr(\tilde U=1|U=0)$).
It also shows that if one is able to compute the likelihood ratio classifier, then one could  virtually obtain the precision-recall curve $\partial \PRD(P,Q)$. 
Unfortunately, in practice the likelihoods  are unknown.
To alleviate this set-back, one can argue like \cite{menon2016linking} that optimizing standard classification losses is {\em in fine} equivalent to minimize a Bregman divergence to the likelihood ratio. Besides, we are going to show that using Eq.~\eqref{eq:alpha-classif} with any other classifier always yields an over-estimation of $\alpha_\lambda$ and $\beta_\lambda$. To do so, we will need the following lemma, which is merely a quantitative version of the Neyman-Pearson Lemma.

\begin{lemma}
\label{lemma-NP}
Let $\tilde U(Z)$ the likelihood ratio classifier defined in Eq.~\eqref{eq:optimal_classifier}, associated with the ratio $\lambda$. Then, any classifier $U'(Z)$ with a lower type II error, that is such that 
\begin{equation*}
    \Pr(U'=1|U=0) \le P(\tilde U=1|U=0) \, ,
\end{equation*}
undergoes an increase of the type I error such that
\begin{equation*}\left\{
\begin{array}{rl}
    \alpha_\lambda' :=\lambda P(U'=0|U=1) + P(U'=1|U=0) 
    & \ge \alpha_\lambda\\
    \beta_\lambda'  := P(U'=0|U=1) + \tfrac1\lambda P(U'=1|U=0) 
    & \ge \beta_\lambda\\
\end{array}\right.
\end{equation*}
\end{lemma}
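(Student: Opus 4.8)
The plan is to derive both displayed inequalities from a single fact: for every classifier the weighted error $\lambda\cdot(\text{type I})+(\text{type II})$ is at least $\alpha_\lambda$, with equality for the likelihood-ratio test, which is essentially the content of Theorem~\ref{th:likelihood_ratio}. Fix first $\lambda\in(0,\infty)$ and encode an arbitrary classifier $U'$ by its acceptance set $A:=\{z:U'(z)=1\}$. Since $Z\sim Q$ conditionally on $U=0$ and $Z\sim P$ conditionally on $U=1$, we get $\Pr(U'{=}1\mid U{=}0)=Q(A)$ and $\Pr(U'{=}0\mid U{=}1)=P(\measSpace\setminus A)$, hence $\alpha_\lambda'=\lambda P(\measSpace\setminus A)+Q(A)$ and $\beta_\lambda'=\tfrac1\lambda\alpha_\lambda'$. (A randomised classifier only replaces $\1_A$ by an acceptance probability $\phi:\measSpace\to[0,1]$ and changes nothing below.)

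The crux is to show $\alpha_\lambda'\ge\alpha_\lambda$ for \emph{every} such $A$. I would prove it by splitting $A$ and its complement along the Hahn partition $\measSpace=\measSpace^+_{\lambda P-Q}\sqcup\measSpace^-_{\lambda P-Q}$ and using, as in the proof of Theorem~\ref{th:likelihood_ratio} via Corollary~\ref{thm:hahn-cor}, the identity $\alpha_\lambda=(\lambda P\wedge Q)(\measSpace)=\lambda P(\measSpace^-_{\lambda P-Q})+Q(\measSpace^+_{\lambda P-Q})$. Subtracting the two expressions, the contribution $\lambda P\bigl((\measSpace\setminus A)\cap\measSpace^-_{\lambda P-Q}\bigr)$ and the contribution $Q\bigl(A\cap\measSpace^+_{\lambda P-Q}\bigr)$ appear on both sides and cancel, leaving $\alpha_\lambda'-\alpha_\lambda=(\lambda P-Q)\bigl((\measSpace\setminus A)\cap\measSpace^+_{\lambda P-Q}\bigr)+(Q-\lambda P)\bigl(A\cap\measSpace^-_{\lambda P-Q}\bigr)$, which is a sum of two nonnegative terms by the defining sign property of the Hahn partition. (Equivalently, with $p:=\tfrac{dP}{d\Pr_Z}$, $q:=\tfrac{dQ}{d\Pr_Z}$, one has $\alpha_\lambda'=\int(\lambda p\,\1_{\measSpace\setminus A}+q\,\1_A)\,d\Pr_Z\ge\int\min(\lambda p,q)\,d\Pr_Z=\alpha_\lambda$, since at each point one of $\1_A,\1_{\measSpace\setminus A}$ equals $1$ and the other $0$.)

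It then remains to read off $\beta_\lambda'=\tfrac1\lambda\alpha_\lambda'\ge\tfrac1\lambda\alpha_\lambda=\bigl(P\wedge\tfrac1\lambda Q\bigr)(\measSpace)=\beta_\lambda$, and to dispatch the degenerate values $\lambda\in\{0,\infty\}$ directly with the $0\times\infty=0$ conventions of the footnote to Theorem~\ref{thm:param}. Finally I would note that the stated hypothesis $\Pr(U'{=}1\mid U{=}0)\le\Pr(\tilde U{=}1\mid U{=}0)$ is not actually needed for the two inequalities — they hold for any classifier — but is exactly what turns the statement into a quantitative Neyman--Pearson lemma: combined with $\alpha_\lambda'\ge\alpha_\lambda$ it forces $\Pr(U'{=}0\mid U{=}1)\ge\Pr(\tilde U{=}0\mid U{=}1)$, i.e.\ the type I error must rise. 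The only delicate point is the bookkeeping across the Hahn partition and the boundary values of $\lambda$; the inequality itself is soft, being a repackaging of Neyman--Pearson optimality.
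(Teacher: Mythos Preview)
Your proof is correct and takes a genuinely different route from the paper's. The paper's argument (Appendix~\ref{app:lemma-NP}) writes $\beta_\lambda'$ as a sum of four terms obtained by conditioning on the joint values of $U'$ and $\tilde U$, introduces $\varepsilon:=\Pr(\tilde U{=}1\mid U{=}0)-\Pr(U'{=}1\mid U{=}0)\ge 0$ as a bookkeeping device, and bounds two of the cross terms using the inequalities $dP\le\tfrac1\lambda\,dQ$ on $\{\tilde U{=}0\}$ and $dP\ge\tfrac1\lambda\,dQ$ on $\{\tilde U{=}1\}$; the $\varepsilon$-contributions eventually cancel. Your approach instead splits the acceptance set $A$ and its complement along the Hahn partition $\measSpace^\pm_{\lambda P-Q}$ and lands directly on
\[
\alpha_\lambda'-\alpha_\lambda=(\lambda P-Q)\bigl((\measSpace\setminus A)\cap\measSpace^+_{\lambda P-Q}\bigr)+(Q-\lambda P)\bigl(A\cap\measSpace^-_{\lambda P-Q}\bigr)\ge 0,
\]
or equivalently the one-line pointwise bound $\lambda p\,\1_{\measSpace\setminus A}+q\,\1_A\ge\min(\lambda p,q)$. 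This is shorter and makes two things transparent that the paper's argument obscures: first, that the inequalities $\alpha_\lambda'\ge\alpha_\lambda$ and $\beta_\lambda'\ge\beta_\lambda$ hold for \emph{every} classifier, not only those with smaller type~II error (the hypothesis is only used afterwards to deduce that type~I must rise); second, that the result is exactly Theorem~\ref{thm:optimality}, so the case split there becomes superfluous. The paper's decomposition, on the other hand, is closer in spirit to the textbook Neyman--Pearson proof and isolates the role of the cross events $\{U'\ne\tilde U\}$, which some readers may find more instructive.
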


\begin{proof}
The proof is similar to the classical proof of the Neyman-Pearson lemma (see Appendix~\ref{app:lemma-NP}). 
\end{proof}

\begin{theorem}\label{thm:optimality}
Let $\tilde U$ the likelihood ratio classifier from Eq.~\eqref{eq:optimal_classifier} associated with the ratio $\lambda$, and let $U'$ be any other classifier.
Using precision-recall pair $(\alpha_\lambda',\beta_\lambda')$ defined in Lemma~\ref{lemma-NP}, we have that
$$
\alpha_\lambda' \geq \alpha_\lambda
\text{ and }
\beta_\lambda' \geq \beta_\lambda
$$
\end{theorem}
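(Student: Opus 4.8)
The plan is to reduce both inequalities to a single minimality statement about $\alpha_\lambda$, and then to prove that statement in one line from the fact that $(\lambda P)\wedge Q$ is, by construction, a pointwise minimum of positive measures. Throughout I would work with $\lambda\in(0,\infty)$; the degenerate endpoints $\lambda\in\{0,\infty\}$, where $\tfrac1\lambda Q$ and $\tfrac1\lambda\Pr(U'=1\,|\,U=0)$ have to be read with the $0\times\infty=0$ convention of the footnote, follow by the same bookkeeping or by a limiting argument.

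First I would record the reduction. Scaling commutes with the largest-common-mass operation, $\tfrac1\lambda\big((\lambda P)\wedge Q\big)=P\wedge\tfrac1\lambda Q$, so $\beta_\lambda=\tfrac1\lambda\alpha_\lambda$; and directly from the formulas of Lemma~\ref{lemma-NP} one has $\beta_\lambda'=\tfrac1\lambda\alpha_\lambda'$. Hence it suffices to prove the single inequality $\alpha_\lambda'\ge\alpha_\lambda$, the estimate on $\beta$ following by dividing through by $\lambda$. Equivalently — and this is the form I would actually establish — I claim that $\alpha_\lambda=\big((\lambda P)\wedge Q\big)(\measSpace)$ is the \emph{minimum} of $\lambda\,\Pr(V=0\,|\,U=1)+\Pr(V=1\,|\,U=0)$ over all classifiers $V$, Theorem~\ref{th:likelihood_ratio} already showing this value is attained at $V=\tilde U$.

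For the inequality itself, write the competing classifier as an indicator, $U'=\1_S$ with $S=\{z:U'(z)=1\}\in\mathcal A$. Exactly as in the proof of Theorem~\ref{th:likelihood_ratio} one has $\Pr(U'=0\,|\,U=1)=P(S^c)$ and $\Pr(U'=1\,|\,U=0)=Q(S)$, so $\alpha_\lambda'=\lambda P(S^c)+Q(S)$. Now I would invoke that $(\lambda P)\wedge Q\le\lambda P$ and $(\lambda P)\wedge Q\le Q$ as positive measures (the defining property of $\wedge$), together with additivity of the measure $(\lambda P)\wedge Q$ on the partition $\measSpace=S\sqcup S^c$:
\begin{equation*}
\alpha_\lambda' \;=\; \lambda P(S^c)+Q(S) \;\ge\; \big((\lambda P)\wedge Q\big)(S^c)+\big((\lambda P)\wedge Q\big)(S) \;=\; \big((\lambda P)\wedge Q\big)(\measSpace) \;=\; \alpha_\lambda .
\end{equation*}
Dividing by $\lambda$ gives $\beta_\lambda'\ge\beta_\lambda$, completing the proof. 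One notices in passing that this argument never uses the hypothesis $\Pr(U'=1\,|\,U=0)\le\Pr(\tilde U=1\,|\,U=0)$ of Lemma~\ref{lemma-NP}: the theorem is precisely that lemma with the extra assumption removed, and one could alternatively split into the case the lemma covers and the case $\Pr(U'=1\,|\,U=0)>\Pr(\tilde U=1\,|\,U=0)$ — but the monotonicity comparison in the second case is much less transparent, which is why I would favour the direct route above.

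I expect the only real subtlety to be bookkeeping rather than mathematics: one must agree on what "classifier" means — if randomized measurable decision rules are allowed, then $\alpha_\lambda'$ is affine in the rule, so the minimum is still attained at a deterministic $\1_S$ and the estimate is unchanged — and one must handle the limiting values $\lambda=0$ and $\lambda=\infty$, where $P\wedge\tfrac1\lambda Q$ and the term $\tfrac1\lambda\Pr(U'=1\,|\,U=0)$ require the $0\times\infty=0$ reading. The measure-theoretic core is just the one-line monotonicity-plus-additivity inequality displayed above.
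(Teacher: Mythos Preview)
Your proof is correct and takes a genuinely different route from the paper's. The paper proceeds indirectly: it first establishes Lemma~\ref{lemma-NP} (a quantitative Neyman--Pearson lemma, proved in the appendix via a decomposition of $\beta_\lambda'$ into four terms $A+B+C+D$ and bounding the cross-terms using the likelihood-ratio inequality on $\{\tilde U=1\}$ and $\{\tilde U=0\}$), and then derives the theorem by a three-way case split on whether $U'$ has smaller type~I error, smaller type~II error, or neither, invoking the lemma or its symmetric version in the non-trivial cases.

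Your argument bypasses all of this. By writing $U'=\1_S$ and observing that $(\lambda P)\wedge Q\le \lambda P$ and $(\lambda P)\wedge Q\le Q$ as measures, you obtain $\alpha_\lambda'=\lambda P(S^c)+Q(S)\ge((\lambda P)\wedge Q)(\measSpace)=\alpha_\lambda$ in one line, with $\beta_\lambda'\ge\beta_\lambda$ following from the scaling identity $\beta_\lambda=\alpha_\lambda/\lambda$. This is strictly more economical: it proves the theorem (and hence Lemma~\ref{lemma-NP} itself, whose extra hypothesis you correctly note is never used) without any case analysis or appeal to the Hahn decomposition underlying the likelihood-ratio classifier. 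What the paper's detour buys is an explicit tie to the classical Neyman--Pearson framework, which motivates the whole section; what your approach buys is a self-contained measure-theoretic proof that makes clear the inequality is just monotonicity of $\wedge$ plus additivity on a partition. Your remarks on randomized classifiers and the endpoints $\lambda\in\{0,\infty\}$ are appropriate caveats and do not hide any difficulty.
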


\begin{proof}
The proof uses Lemma~\ref{lemma-NP} and its symmetric version (obtained by swapping the role of type-I and type-II errors). Three cases may arise:
\begin{enumerate}
    \item If $\Pr(U'=0|U=1)\geq \Pr(\tilde U=0|U=1)$ and $\Pr(U'=~1|U=0)\geq \Pr(\tilde U=1|U-0)$ then the conclusion of the theorem is trivially true;
    \item If $\Pr(U'=1|U=0)\leq \Pr(\tilde U=1|U=0)$, then the conclusion is ensured by Lemma~\ref{lemma-NP};
    \item If $\Pr(U'=0|U=1)\leq \Pr(\tilde U=0|U=1)$, then one should use the symmetric version of Lemma~\ref{lemma-NP}.
\end{enumerate}%
\end{proof}


\section{Algorithm}\label{sec:algorithm}

Based on the above analysis, we propose the Algorithm~\ref{algo:PRC} to estimate (via the function \texttt{estimatePRCurve}) the Precision-Recall curve of two probability distributions known through their respective sample sets.

\begin{algorithm}[!htb]
\SetKwInput{KwData}{Inputs}
\SetKwInput{KwResult}{Output}

 \SetAlgoLined\DontPrintSemicolon
 \SetKwFunction{algo}{estimatePRCurve}
 \SetKwFunction{testtrain}{createTrainTest}
 \SetKwFunction{learn}{learnClassifier}
 \SetKwFunction{eval}{estimatePRD}
 \SetKwFunction{sort}{sort}
 \KwData{
    Dataset of target/source sample pairs: $\mathcal D =\{(X_i,Y_i)\sim P\times Q\text{ i.i.d} /i\in\{1,\ldots,N\}\}$,\newline
    Parameterization of the PR curve: $\Lambda=\{\lambda_1,\ldots,\lambda_L\}$
 }
 \KwResult{$\partial \PRD_\Lambda\simeq\{(\alpha_\lambda,\beta_\lambda) / \lambda\in\Lambda\}$}
 \SetKwProg{myalg}{Algorithm}{}{}
 \SetKwProg{myproc}{Procedure}{}{}
 \myalg{\algo{$\mathcal D$, $\Lambda$}}{
 \nl $\mathcal D^{train}, \mathcal D^{test}=$\testtrain{$\mathcal D$}\;
 \nl $f = $\learn{$\mathcal D^{train}$}\;
 \nl $\partial \PRD_\Lambda= $\eval{$f$, $\mathcal D^{test}$, $\Lambda$}\;
 \nl \KwRet $\partial \PRD_\Lambda$\;
 }
 
 \;
 \setcounter{AlgoLine}{0}
 \myproc{\testtrain{$\mathcal D$}}{
 \nl $\mathcal D^{train} = \varnothing$, $\mathcal D^{test} = \varnothing$ \;
 \nl \For{$i\in\{1,\ldots, N\}$} {
    %
    \nl $U_i^{}\sim \mathcal{B}_{\frac 12}$\;
    \nl $Z_i^{train} = U_i^{} X_i + (1-U_i^{}) Y_i$ \;
    \nl $Z_i^{test} = (1 - U_i^{}) X_i + U_i^{} Y_i$ \;
    \nl $\mathcal D^{train} \leftarrow \mathcal D^{train}\cup \{(Z^{train}_i, U_i^{})\}$ \;
    \nl $\mathcal D^{test} \leftarrow \mathcal D^{test}\cup \{(Z^{test}_i, 1 - U_i^{})\}$ \;
 }
 \nl \KwRet $\mathcal D^{train}, \mathcal D^{test}$\;
 }
 
 \;
 \setcounter{AlgoLine}{0}
 \myproc{\eval{$f$, $\mathcal D^{test}$, $\Lambda$}}{
 \nl $fVals = \{f(z)/ (z,u)\in \mathcal D^{test}\}$\;
 \nl $errRates=\varnothing$ \;
 \nl $N_j = \left|{\{(z,u)\in\mathcal D^{test}/u= j \}}\right|$, for $j \in \{0,1\}$ \;
 \nl \For{$t\in fVals$} {
    %
    \nl \small $\mathrm{fpr}=\tfrac 1{N_1} \left|{\{(z,u)\in \mathcal D^{test}/ f(z)< t, u=1\}}\right|$ \;
    \nl \small $\mathrm{fnr} = \tfrac 1{N_0} \left|{\{(z,u)\in \mathcal D^{test}/ f(z)\geq t, u=0\}}\right|$ \;
    \nl $errRates \leftarrow errRates \cup \{(fpr, fnr)\}$ \;
 }
 \nl $\partial \PRD_\Lambda=\varnothing$\;
 \nl \For{$\lambda\in\Lambda$} {
    \nl $\alpha_\lambda=\min(\{\lambda\mathrm{fpr}+ \mathrm{fnr}/(\mathrm{fpr},\mathrm{fnr})\in {errRates}\})$\;
    \nl $\partial \PRD_\Lambda \leftarrow \partial \PRD_\Lambda\cup\{(\alpha_\lambda,\frac{\alpha_\lambda}\lambda)\}$\;
 }
 \nl \KwRet $\partial \PRD_\Lambda$\;
 }
 \;
 \caption{Classification-based estimation of the Precision-Recall curve.}
 \label{algo:PRC}
\end{algorithm}

\paragraph{Binary Classification}
We know from Theorem~\ref{th:likelihood_ratio} that the Precision-Recall curve can be exactly inferred from the likelihood ratio classifier denoted as $\tilde U$.
However, as explained earlier, since both the generated and target distributions ($Q$ and $P$ respectively) are unknown, one could not compute in practice this optimal classifier.
Instead, we propose to \emph{train} a binary classifier $U'$.
Recall that from Theorem~\ref{thm:optimality} the estimated PR curve, being computed with a sub-optimal classifier, lies therefore above the optimal one.
We only assume in the following that the classifier --denoted to as $f$ in the algorithm description-- which is returned by the function \texttt{learnClassifier} after the training, ranges in a continuous interval (e.g. $[0,1]$), so that the binary classifier $U'$ is actually obtained by thresholding: $U'(Z) = \1_{f(Z)\ge t}$.

As a result, since the classifier needs some training data, the $N$ sample pairs $\mathcal D =\{(X_i,Y_i), 1\le i \le N, X_i \sim P, Y_i \sim Q\}$ in the input dataset are first split into two sets $\mathcal D^{\text{train}}$ and $\mathcal D^{\text{test}}$ (function \texttt{createTrainTest}). For each image pair $(X_i,Y_i)$, a Bernoulli random variable $U_i$ with probability $\tfrac12$ is drawn to decide whether a true sample $X_i$ (when $U_i = 1$) or a fake one $Y_i$ (when $U_i=0$) is used for the training set $\mathcal D^{\text{train}}$. The other sample is then collected in the test set $\mathcal D^{\text{test}}$ to compute the PR curve.

\paragraph{Precision and Recall estimation}
Recall from Theorem~\ref{thm:param} that the PR curve $\partial \PRD = \{(\alpha_\lambda, \beta_\lambda), \lambda \in \overline{\R^+}\}$ is parametrized by the ratio $\lambda = \tfrac{\alpha}{\beta} $ between precision $\alpha$ and recall $\beta$.
We denote by $\partial \PRD_\Lambda$ the approximated PR curve when this parameter takes values in the set $\Lambda$.

Given a test dataset $\mathcal{D}^{\text{test}}$, the function \texttt{estimatePRD} computes the PR values $(\alpha_\lambda, \beta_\lambda)$ from the \emph{false positive rate} $\mathrm{fpr}$ and the \emph{false negative rate} $\mathrm{fnr}$ of the trained classifier $f$:
\begin{itemize}
    \item $\mathrm{fpr}$ corresponds to the empirical type I error rate, that is here (arbitrarily) the proportion of real samples $z = X_i$ (for which $u=1$) that are misclassified as generated samples (\emph{i.e.} when $f(z) < t$); 
    
    \item conversely, $\mathrm{fnr}$ is the empirical type II error rate, that is the proportion of generated samples $z = Y_i$ (for which $u=0$) that are misclassified as real samples (\emph{i.e.} $f(z) \ge t$);
\end{itemize}
Now, this raises the question of setting the threshold $t$ that defines the binary classifier $U'(z)=\1_{f(z)>t}$.
Since Theorem~\ref{thm:optimality} states that the computed 
precision and recall values $(\alpha_\lambda, \beta_\lambda)$ are actually upper-bound estimates, we use the minimum of these estimates when spanning the threshold value in the range of $f$.
Note that it is sufficient to consider the finite set $fVals$ of classification scores over  $\mathcal D^{test}$.

\paragraph{Comparison with ROC curves}
Using a ROC curve (for Receiver Operating Characteristic) to evaluate a binary classifier is very common in machine learning. 
Let us recall that it is the curve of the true positive rate ($1-\mathrm{fnr}$) against the false positive rate ($\mathrm{fpr}$) obtained for different classification thresholds.
Considering again the likelihood ratio test classifiers for all possible ratios would then provide the Pareto optimal ROC curve and could be used to assess if $P$ and $Q$ are similar or not. It turns out that the the frontier of the Mode Collapse Region proposed by \cite{lin2018pacgan} provides exactly this optimal ROC curve. For the recall, this notion is originally defined as follows:
\begin{equation*}
\begin{split}
MCR(P,Q)=\{&(\epsilon,\delta)/ 0\leq\epsilon<\delta\leq 1, \\
&\exists A\in \mathcal A, P(A)\geq\delta, Q(A)\leq\epsilon\}
\end{split}
\end{equation*}
From this definition, one can see that the MCR exhibits mode dropping by analyzing if part of the mass of $P$ is absent from $Q$. The notion differs from PRD at least in two ways. First MCR is not symmetric in $P$ and $Q$. Then it uses the mass of a subset $A$ instead of an auxiliary measure $\mu$ to characterize the shared / unshared mass between $P$ and $Q$. Despite those differences, the two notions serve a similar purpose. Given their respective interpretation as optimal type I vs type II errors, they mostly differ in terms of visual characterization of mode dropping.

\section{Experiments}\label{sec:exp}

In this section we demonstrate that Algorithm \ref{algo:PRC} is consistent with the expected notion of precision and recall on controlled datasets such as CIFAR-10 and Imagenet. The results even compare favorably to \cite{sajjadi2018} for such datasets. The situation is more complex when one distribution is made of generated samples, because the expected gold-standard precision-recall curve cannot be predicted in a trivial way.

In all our experiments, we compute the precision-recall curve between the distribution of features of the Inception Network~\cite{szegedy2016rethinking} (or some other network when specified) instead of using raw images (this choice will be discussed later on). In simple words, it means that we first extract inception features before training / evaluating the classifier. The classifier itself is an ensemble of $10$ linear classifiers. The consensus between the linear classifiers is computed by evaluating the median of their predictions. Besides, each linear classifier is trained independently with the ADAM algorithm. We progressively decrease the learning rate starting from $10^{-3}$ for $50$ epochs and use a fixed weight decay of $0.1$.
Any sophisticated classification method could be used to achieve our goal (deeper neural network, non-linear SVM, \emph{etc}), but this simplistic ensemble network turned out to be sufficient in practice.
Observe that this training procedure is replacing the pre-processing (K-means clustering) in the original approach of~\cite{sajjadi2018}, which relies also on inception features 
so that both methods share a similar time complexity.

\begin{figure}[tb]
    \centering
    \includegraphics[width=0.49\linewidth]{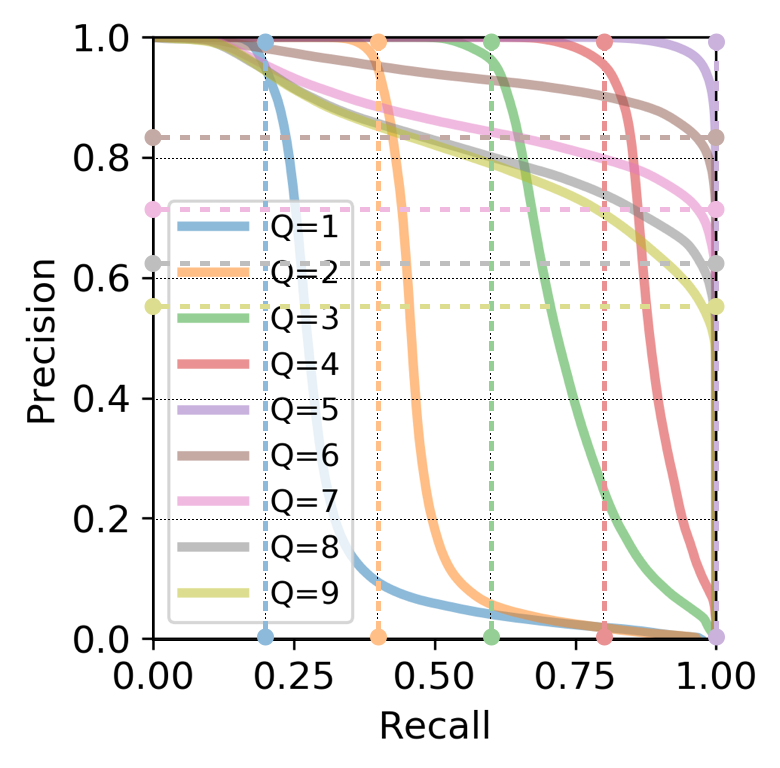}
    \includegraphics[width=0.49\linewidth]{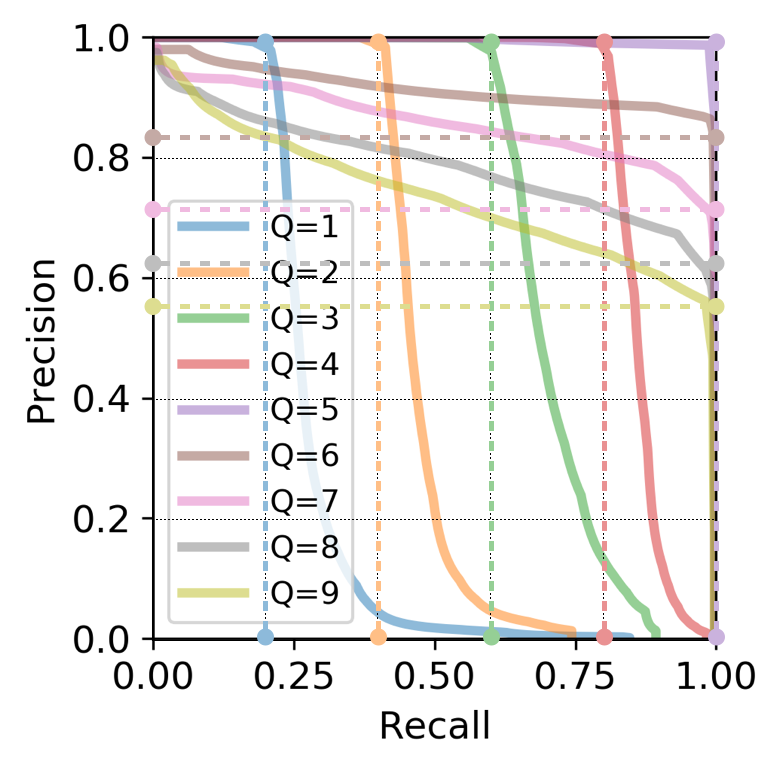}
    \caption{Precision-recall curves for $P$ made of the five first classes of CIFAR-10 versus $Q$ made of $q\in \{1,\ldots,9\}$ first classes. Left estimate from \cite{sajjadi2018} and right our implementation.}
    \label{fig:cifar10}
\end{figure}

Figure~\ref{fig:cifar10} reproduces an experiment proposed by~\cite{sajjadi2018}. It presents the estimated precision-recall curves on distributions made from CIFAR-10 samples. The reference distribution $P$ is always the same and it gathers samples from the first $5$ classes. On the other hand, $Q$ is composed of the first $q$ classes. When $q\leq 5$ we should expect a rectangular curve with a maximum precision of $1$ and maximum recall of $q/5$ (as illustrated in middle of Fig.~\ref{fig:PRcurves}). Similarly, when $q>5$ the expected curve is also rectangular one, but this time the maximum precision is $5/q$ and the maximum recall is $1$ (Fig.~\ref{fig:PRcurves}, left). These expected theoretical curves are shown in dash.
The original implementation from~\cite{sajjadi2018} is shown on the left and ours on the right. It is clear that both methods capture the intended behaviour of precision and recall.
Besides, two subtle differences can be observed. First, as implied by Theorem~\ref{thm:optimality} our implementation is always overestimating the theoretical curve (up to the variance due to finite samples). On the contrary, the clustering approach does not provide similar guarantee (as observed experimentally). Second, our implementation is slightly more accurate around the horizontal and vertical transitions.

One particular difficulty with the clustering approach lies in choosing the number of clusters.
While the original choice of $20$ is reasonable for simple distributions, it can fail to capture the complexity of strongly multi-modal distributions.
To highlight this phenomenon, we present in Figure~\ref{fig:imagenet} another controlled experiment with Imagenet samples.
In this case, $P$ and $Q$ are both composed of samples from $80$ classes, with a fixed ratio $\rho$ of common classes.
In this case, the expected curves can be predicted (see dash curves).
They correspond to rectangular curves with both maximal precision and recall equal to $\rho$.
As can be seen on the experimental curves, the clustering approach is prone to mixing the two datasets in the same clusters.
It therefore produces histograms that share a much heavier mass than the non discretized distributions, resulting in PR curves that depart strongly from the expected ones.
Of course, such a drawback could be partially fixed by adapting the number of clusters. However even then the clustering approach may fail, as is demonstrated in Figure~\ref{fig:varying-clusters}. In this experimient, the distribution $P$ is obtained approximately 60\% female faces and 40\% male faces from the CelebA dataset, while $Q$ is composed of female only. The theoretical curve is a sharp transition arising at recall $0.6$. This is well captured by our estimate (right curve) while varying the number of clusters always leads to an oversmooth estimate, with either under estimated precision or over-estimated recall.

\begin{figure}[tb]
    \centering
    \includegraphics[width=0.49\linewidth]{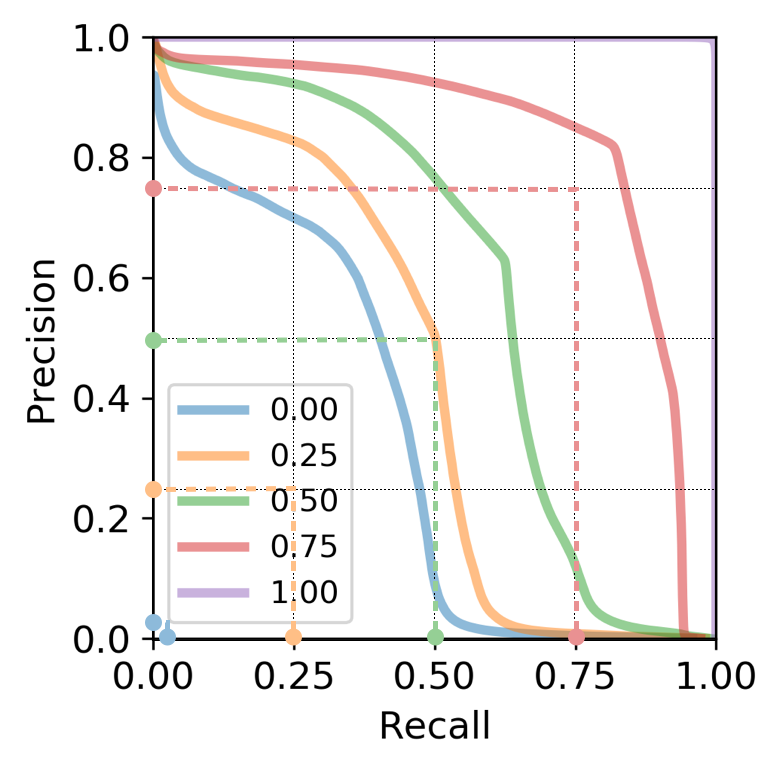}
    \includegraphics[width=0.49\linewidth]{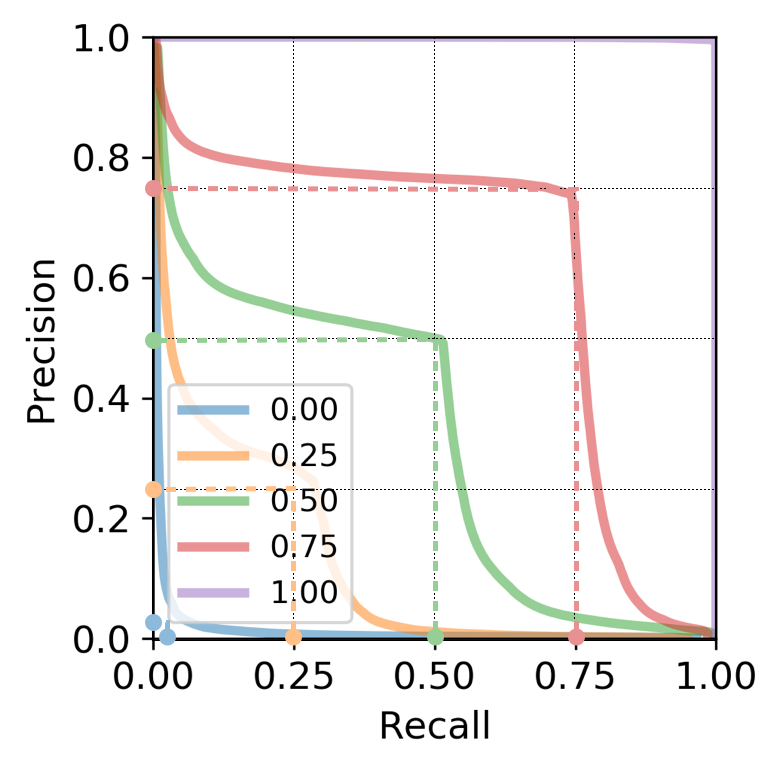}
    \caption{PR curves for $P$ and $Q$ made of 80 classes from ImageNet. The ratio of common classes varies from $0$ to $100\%$. Left: from \cite{sajjadi2018}. Right: our implementation.}
    \label{fig:imagenet}
\end{figure}

\paragraph{Experiments on Generated Images}
Figure~\ref{fig:GAN_PR} illustrates the proposed approach to three GANs trained on the Celeba-HQ~\cite{karras2018progressive} dataset. 
We highlight the two recent approaches of progressive GANs in~\cite{karras2018progressive} and the 0 centered gradient penalty ResNets found in~\cite{Mescheder2018ICML} as they produce realistic images. For comparison, we also include DCGAN~\cite{RadfordDCGAN15}. For analysis with Algorithm~\ref{algo:PRC}, we choose the first $N=1000$ images of Celeba-HQ, and generate as many images with each GAN. 
For training the classifiers, we split each set (real and fake images) into $900$ training images and $100$ test images. In light of the previous experiments, we choose to train our architecture on top of vision relevant features. 
Because we are dealing with faces, we choose the convolutional part of the VGG-Face network~\cite{Parkhi15VGGFace}. One advantage on using VGG-Face is that artifacts present in generated images, such as unrealistic backgrounds, are mitigated by the VGG-face network, so that classification can focus on the realism of facial features. Of course, small artifacts can be present in even high quality generators and a perfect classifier could "cheat" by only using seeing such artifacts. Fig.~\ref{fig:GAN_PR}, shows the computed PR curves for the three generators. Intuitively, networks with high precision should generate realistic images consistently. Progressive GANs achieve a maximum precision of $1.0$, and overall high precision, which is visually consistent. DCGAN is producing unrealistic images which is reflected by it's overall low precision. In some sense, recall reflects the diversity of the generated images with respect to the dataset and it is interesting to note all networks achieved higher recall than precision. 
Finally, for the sake of comparison with the FID~\cite{heusel2017gans}, the networks in Fig.~\ref{fig:GAN_PR} achieved FIDs of $25.23$, $27.61$ and $67.84$ respectively from left to right (lower is better). 
\begin{figure}[tb]
    \centering
    \includegraphics[width=0.49\linewidth]{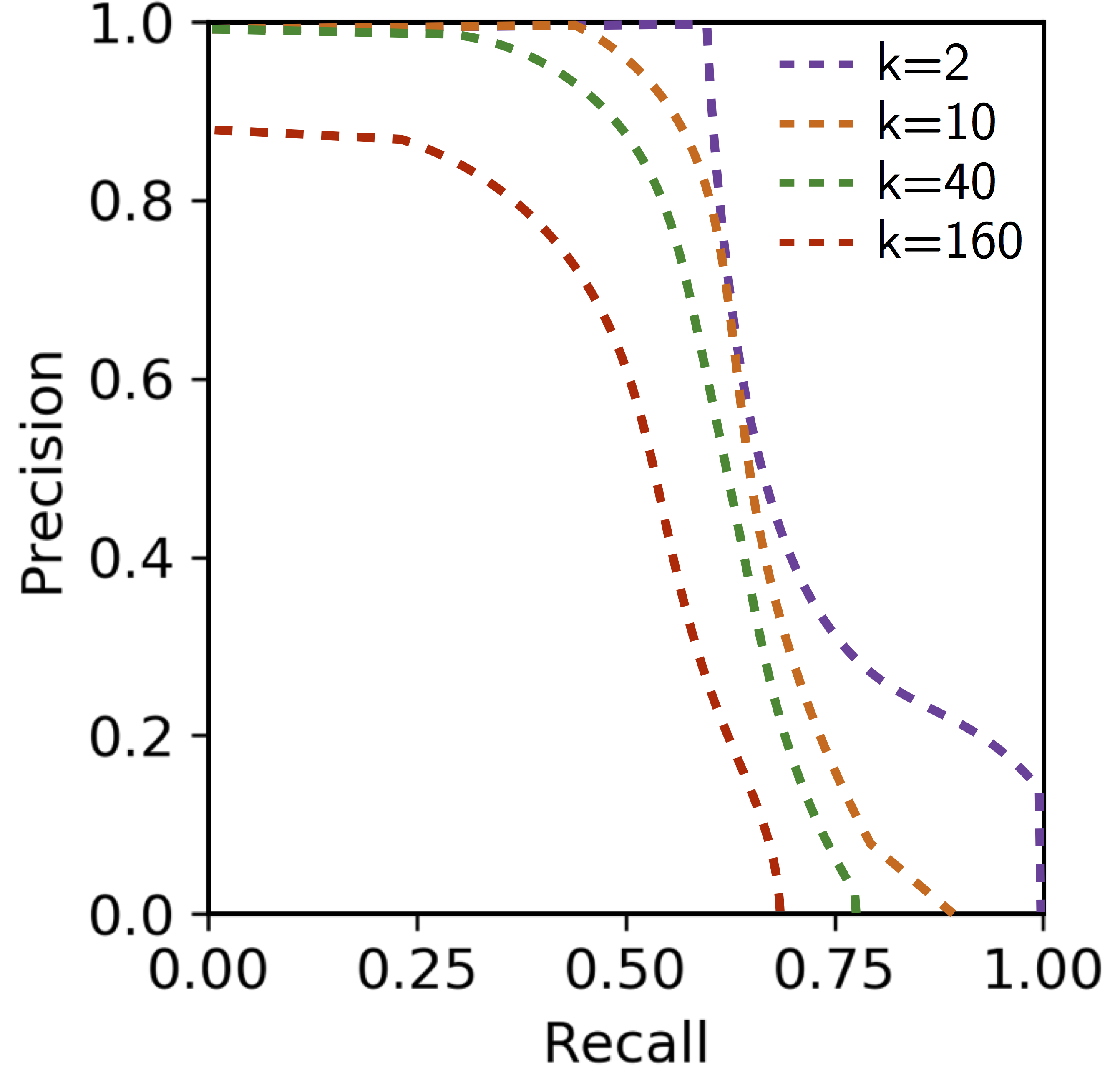}
    \includegraphics[width=0.49\linewidth]{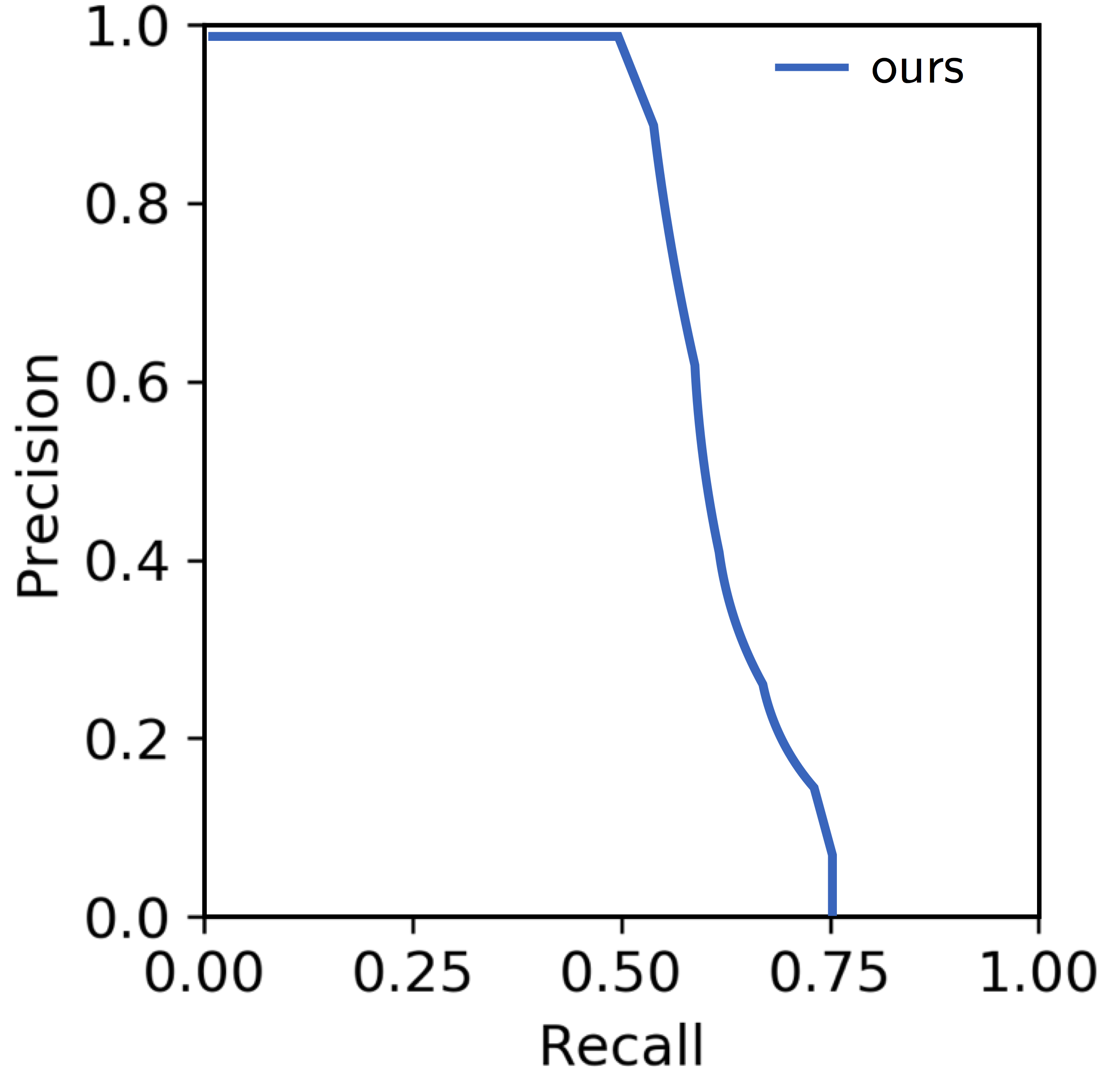}
    \caption{PR curves when $P$ is composed of faces from CelebA (60\% females) and $Q$ is composed of females only.  }
    \label{fig:varying-clusters}
\end{figure}
\paragraph{}
\iftrue
Next, we analyze BigGAN \cite{brock2018large} on ImageNet for our classification approach and the clustering approach presented in \cite{sajjadi2018}. Both approaches use inception features as before. We take $80$ images from the first $40$ classes of ImageNet, and then $20$ images from the first $40$ classes for test images. We use $20$ clusters for the K-means approach and a single linear layer for the classification approach. Fig.~\ref{fig:GAN_ImageNet} highlights a large difference between the approaches; the clustering approach overestimates the similarities between the distributions and the classification approach easily separates the two distributions. As was demonstrated in Fig.~\ref{fig:imagenet}, there are more classes than clusters, which could explain why images from both distributions may fall into the same clusters, in which ~\cite{sajjadi2018} will fail to discern the two distributions. It is interesting to note that the classifier easily separates the distributions despite the inception features being sparse for image samples. One can observe a lack of intra-class diversity in the BigGAN samples, which may be how the classifier discerns the samples. We leave further investigation of this discrepancy for future work. 
\fi
\begin{figure}[tb]
    \iftrue
        \centering
        \includegraphics[width=\linewidth,trim={22cm 0 0 0},clip]{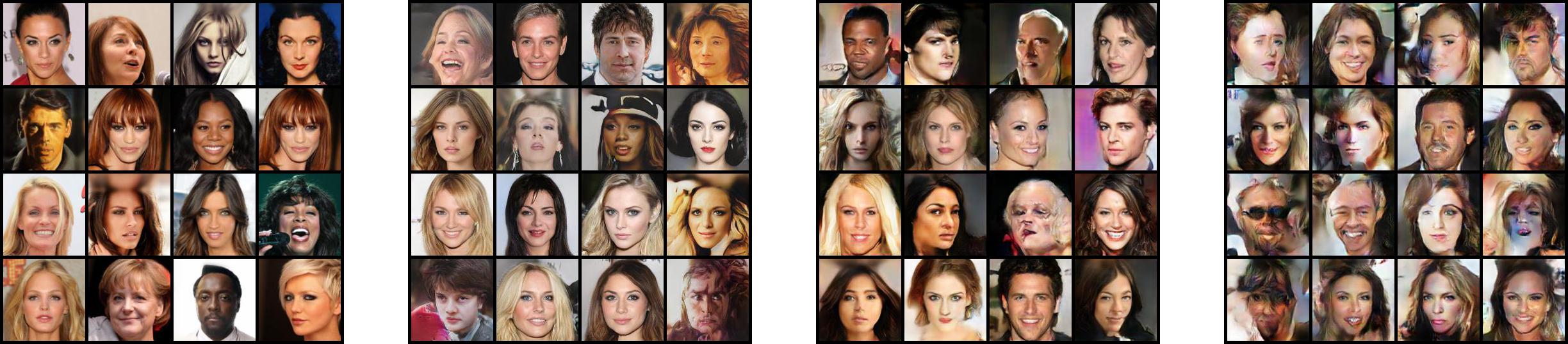}
        
        \centering
        \includegraphics[width=0.32\linewidth]{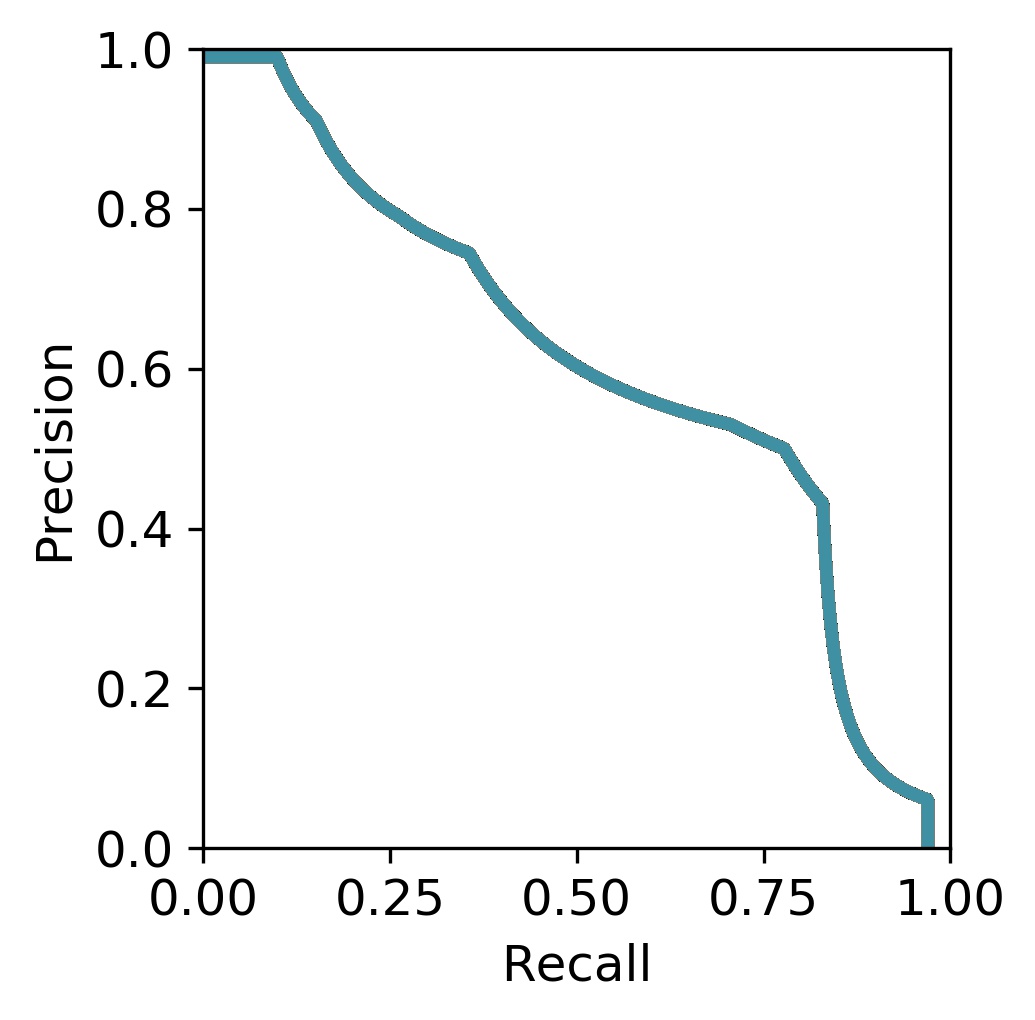}
        \hfill
        \includegraphics[width=0.32\linewidth]{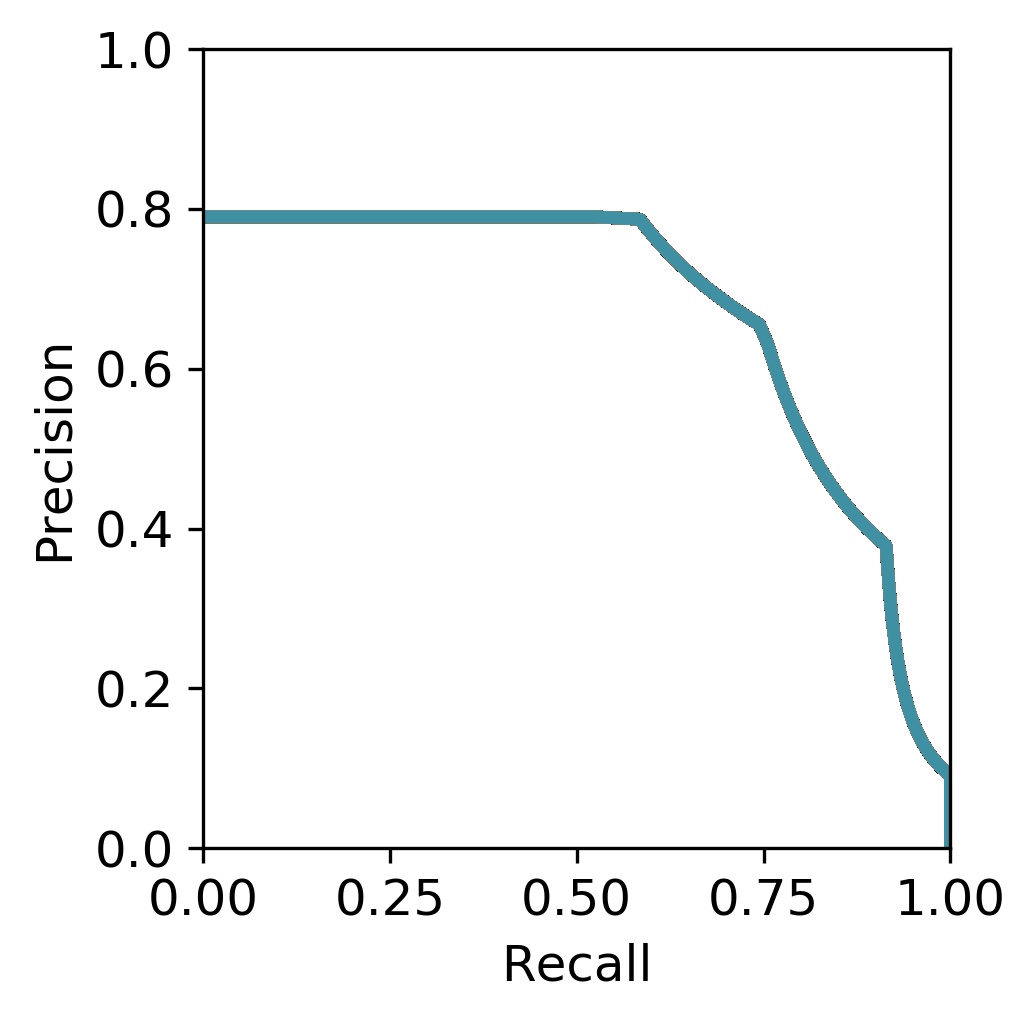}
        \hfill
        \includegraphics[width=0.32\linewidth]{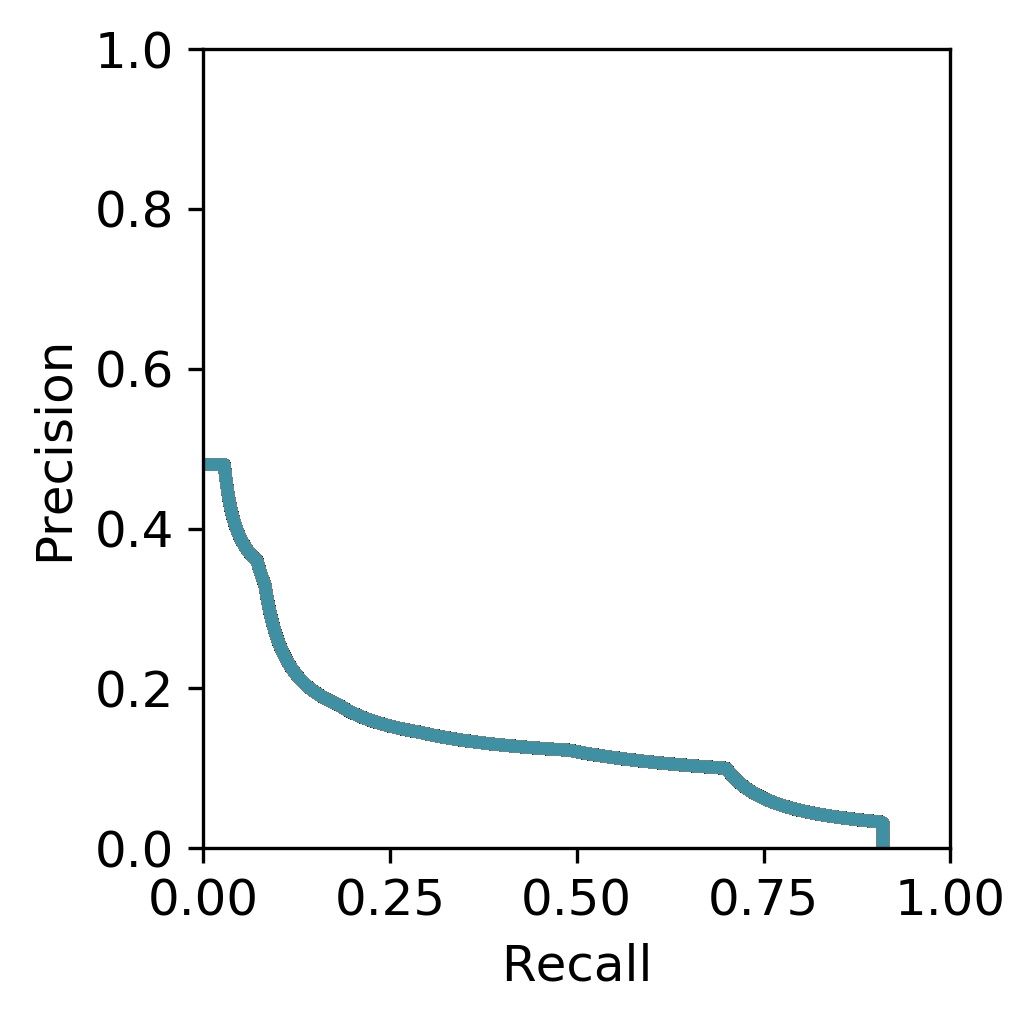}
    \else
        \centering
        \includegraphics[width=0.96\linewidth]{images/celeba_pggan_mesch_dcgan.jpg}
        
        \centering
        \hspace*{.24\linewidth}
        \includegraphics[width=0.24\linewidth]{images/celeba_pggan_vggf_classifier_PR.jpg}
        \includegraphics[width=0.24\linewidth]{images/celeba_mesch_vggf_classifier_PR.jpg}
        \includegraphics[width=0.24\linewidth]{images/celeba_dcgan_vggf_classifier_PR.jpg}
    \fi
    \caption{Precision-recall curves and generated images for various popular GANs on CelebA-HQ dataset \cite{karras2018progressive}.
    From left to right: 
    PGGAN~\cite{karras2018progressive},
    ResNet~\cite{Mescheder2018ICML}, and 
    DCGAN~\cite{RadfordDCGAN15}.
    }
    \label{fig:GAN_PR}
\end{figure}

\iftrue
\begin{figure}[tb]
    \centering
    \includegraphics[width=0.45\linewidth]{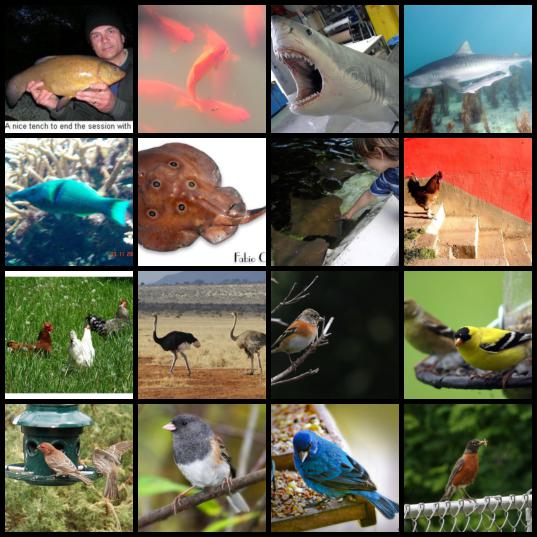}
    \hfill
    \includegraphics[width=0.45\linewidth]{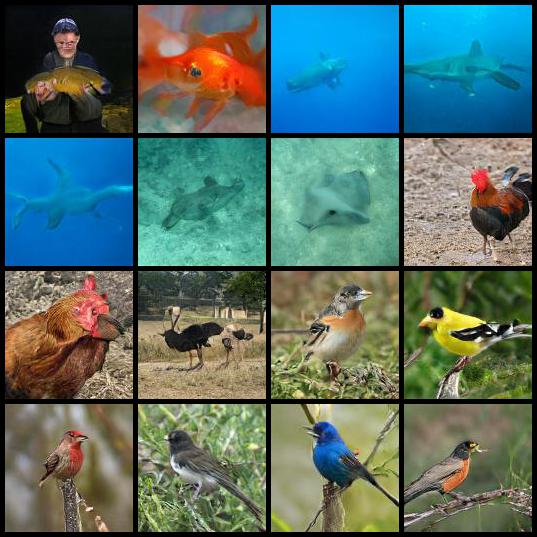}
    
    \centering
    \includegraphics[width=0.45\linewidth]{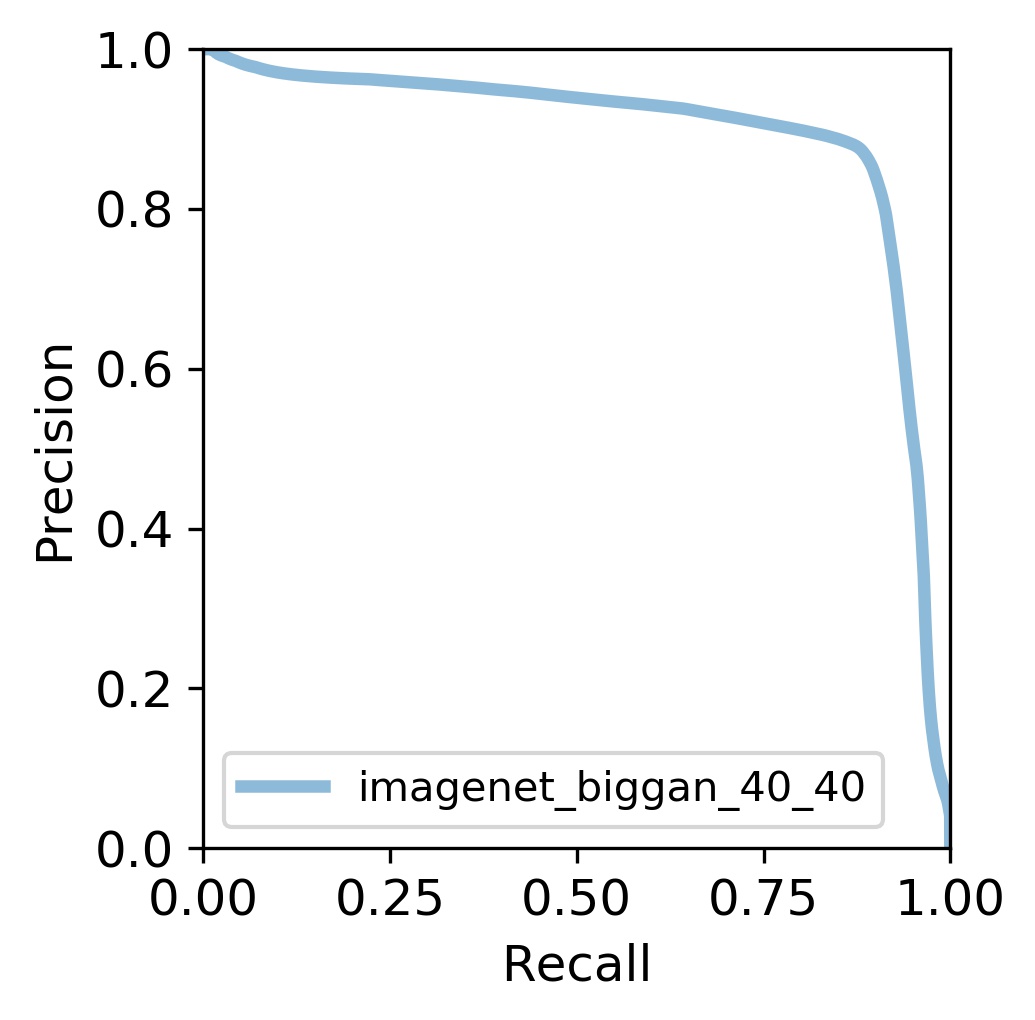}
    \hfill
    \includegraphics[width=0.45\linewidth]{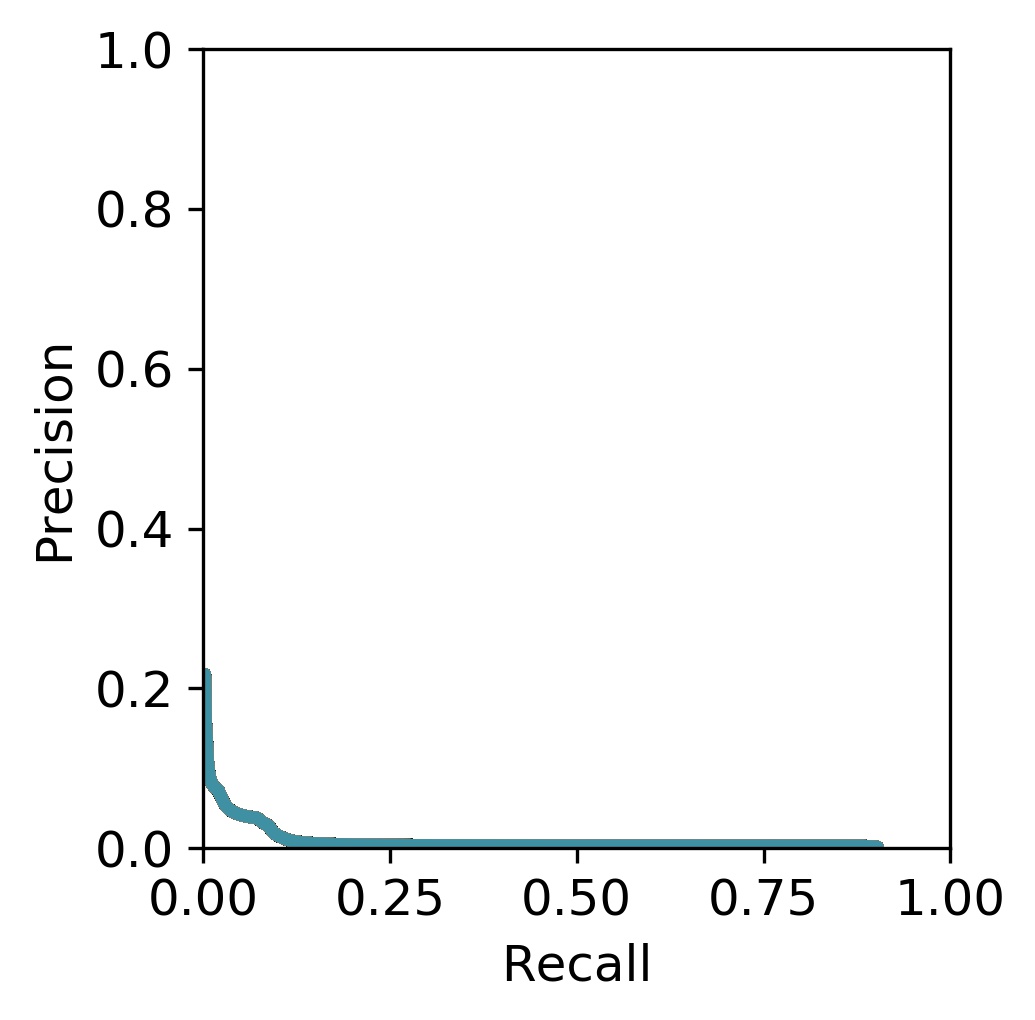}
    \caption{Evaluating generated samples on ImageNet.
        First row: Samples from various categories of ImageNet (on the left), and generated samples for the same categories from BigGAN~\cite{brock2018large} (on the right). 
        Second Row: PR curves computed with the clustering approach of~\cite{sajjadi2018} and ours.
        }
    \label{fig:GAN_ImageNet}
\end{figure}
\fi

\section{Discussion and future work}\label{sec:conclusion}

In this paper, we have revisited a recent definition of precision-recall curve for comparing two distributions. Besides extending precision and recall to arbitrary 
distributions, we have exhibited a dual perspective on such notions. In this new view, precision-recall curves are seen through the prism of binary classification. Our central result states that the Pareto optimal precision-recall pairs can be obtained as linear combinations of type I and type II errors of likelihood ratio classifiers. Last, we have provided a novel algorithm to evaluate the precision-recall curves from random samples drawn within the two involved distributions.

\paragraph{Discussion}
One achievement of our formulation is that one can directly define the precision-recall curves of distributions defined on continuous manifolds. In particular, our definition could be applied directly in the image domain, instead of first embedding the distribution in a feature space. 
From the strict computation perspective, there should not be any daunting obstacle in the way, as soon as we can have access to enough data to train a good classifier.
This is usually the case for generative models, since the standard datasets are quite massive.

However, it is not obvious whether classifiers trained on raw-data provide useful notions of PR-curves. 
Indeed, given the current state of affairs of generative modeling, we think that the raw image curves may be less useful.
Indeed, until now, even the best generative models produce artifacts (blurriness, structured noise, etc.).
As such, the theoretical distributions (real and generated) are mutually singular.
So, their theoretical precision-recall curve should be always trivial (\emph{i.e.} reduced to the origin).
It is hence a necessary evil to embed the distributions into a feature space as it allows a classifier to focus its attention on statistical disparities that are meaningful for the task at hand.
For instance, when evaluating a face generator, it makes sense to use features that are representative of facial attributes. Nonetheless, future work should investigate a wider variety of pre-trained features as well as classifiers trained on raw data to determine which method is most suitable for computing PR curves. 

\paragraph{Perspectives}
This work offers some interesting perspectives that we would like to investigate in the future.
First, as opposed to the usual GAN training procedure where a scalar divergence is used to assess the similarity between generated and target distributions, one could use the proposed precision and recall definitions to control the quality of the generator
while preventing mode-collapse.
For instance, the discriminator could use the role of the classifier, as it has been done in \cite{SalimansIS16}.

Another interesting aspect is that like most existing divergences comparing probability distributions, the proposed approach is based on likelihood ratios that only compare samples having the same values.
More flexible ways do exist to compare distributions, based for instance on optimal transport, such as the Wasserstein distance (e.g 1-Wasserstein GAN \cite{arjovsky2017wasserstein}) and could be adapted to keep the notion of trade-off between quality and diversity.

\appendix

\section{Proof of Lemma~\ref{lemma-NP}}
\label{app:lemma-NP}
Let $\varepsilon\geq 0$ such that $\Pr(U'=1|U=0)=\Pr(\tilde U=1|U=0)-\varepsilon$.
First, we decompose $\beta_\lambda'$ into 4 terms
\begin{equation*}
    \begin{split}
        \beta_\lambda'=&\Pr(U'=0|U=1)+\tfrac 1\lambda \Pr(U'=1|U=0)\\
        =& \Pr(U'=0, \tilde U=0| U=1) + \Pr(U'=0,\tilde U=1 | U=1) \\
        & + \tfrac 1\lambda \Pr(U'=1|U=0)\\
        =& \Pr(\tilde U=0|U=1) - \Pr(\tilde U=0, U'=1|U=1) 
        \\
        &+ \Pr(U'=0,\tilde U=1 | U=1) + \tfrac 1\lambda \Pr(U'=1|U=0).
    \end{split}
\end{equation*}
Considering separately each of the previous terms, we have 
\begin{equation*}
    \begin{split}
        A &= \Pr(\tilde U=0|U=1) \,,
        \\
       - B=
       & \Pr(U'=1, \tilde U=0|U=1)=\int \1_{U'=1} \1_{\tilde U=0} dP\\
       \leq& \int \1_{U'=1} \1_{\tilde U=0} \tfrac 1\lambda dQ 
       = \tfrac 1\lambda \Pr( U'=1, \tilde U=0|U-0) \\
       =& \tfrac 1\lambda (\Pr(U'=1|U=0) - \Pr(U'=1,\tilde U=1|U=0))\\
       =& \tfrac 1\lambda \Big(\Pr(\tilde U=1|U=0)-\varepsilon\\ &{\color{white}\tfrac 1\lambda}- (\Pr(\tilde U=1|U=0) - \Pr(U'=0, \tilde U=1|U=0))\Big)\\
       =& \tfrac 1\lambda( \Pr(U'=0, \tilde U=1|U=0)-\varepsilon).
    \end{split}
\end{equation*}
Finally 
\begin{equation*}
    \begin{split}
        B\geq& - \tfrac 1\lambda( \Pr(U'=0, \tilde U=1|U=0)-\varepsilon).
    \end{split}
\end{equation*}
Similarly
\begin{equation*}
    \begin{split}
        C =& \Pr(U'=0,\tilde U=1 | U=1) = \int \1_{U'=0}\1_{\tilde U=1}dP\\
        \geq& \int \1_{U'=0} \1_{\tilde U=1} \tfrac 1\lambda dQ 
        = \tfrac 1\lambda \Pr(U'=0, \tilde U=1|U=0) \,.
    \end{split}
\end{equation*}
Using both inequalities for $B$ and $C$, one gets
\begin{equation*}
    B+C \geq \frac \varepsilon\lambda . 
\end{equation*}
Last, 
\begin{equation*}
    \begin{split}
        D =&\tfrac 1\lambda \Pr(U'=1|U=0) = \tfrac 1\lambda (\Pr(\tilde U=1|U=0)-\varepsilon).\\
    \end{split}
\end{equation*}
Putting everything together, namely $\beta_\lambda' = A+B+C+D$, yields
\begin{equation*}
    \begin{split}
        \beta_\lambda'\geq& \Pr(\tilde U=0|U=1)+\tfrac 1\lambda \Pr(\tilde U=1|U=0)=\beta_\lambda.
    \end{split}
\end{equation*}
Using (by a slight abuse of notation) $\alpha_\lambda = \lambda \beta_\lambda$ and $\alpha_\lambda' = \lambda \beta_\lambda \geq \alpha_\lambda$ concludes the proof.
\qed{}

\clearpage
\section*{Acknowledgments}
This work was supported by fundings from {\em R\'egion Normandie} under grant \emph{RIN NormanD'eep}. 
The authors are grateful to the anonymous reviewers for their valuable comments and suggestions.

\bibliography{references}
\bibliographystyle{icml2019}

\end{document}